\theoremstyle{definition}  
\newtheorem{lemma}{Lemma}
\newtheorem{fact}{Fact}
\theoremstyle{plain}
\xpatchcmd{\proof}{\itshape}{\normalfont\proofnameformat}{}{}
\newcommand{\proofnameformat}{\bfseries}
\newcommand{\pref}[1]{\prettyref{#1}}
\newcommand{\pfref}[1]{Proof of \prettyref{#1}}
\newcommand{\savehyperref}[2]{\texorpdfstring{\hyperref[#1]{#2}}{#2}}
\def\ddefloop#1{\ifx\ddefloop#1\else\ddef{#1}\expandafter\ddefloop\fi}
\def\ddef#1{\expandafter\def\csname 
	bb#1\endcsname{\ensuremath{\mathbb{#1}}}}
\def\ddef#1{\expandafter\def\csname 
	#1\endcsname{\ensuremath{{\bm{#1}}}}}
\def\ddef#1{\expandafter\def\csname 
	#1\endcsname{\ensuremath{{\bm{#1}}}}}
\def\ddefloop#1{\ifx\ddefloop#1\else\ddef{#1}\expandafter\ddefloop\fi}
\def\ddef#1{\expandafter\def\csname 
	b#1\endcsname{\ensuremath{\mathbf{#1}}}}
\def\ddef#1{\expandafter\def\csname 
	c#1\endcsname{\ensuremath{\mathcal{#1}}}}
\def\ddef#1{\expandafter\def\csname 
	f#1\endcsname{\ensuremath{\mathfrak{#1}}}}
\def\ddef#1{\expandafter\def\csname 
	h#1\endcsname{\ensuremath{\widehat{#1}}}}
\def\ddef#1{\expandafter\def\csname 
	hc#1\endcsname{\ensuremath{\widehat{\mathcal{#1}}}}}
\def\ddef#1{\expandafter\def\csname 
	t#1\endcsname{\ensuremath{\widetilde{#1}}}}
\def\ddef#1{\expandafter\def\csname 
	tc#1\endcsname{\ensuremath{\widetilde{\mathcal{#1}}}}}
\newcommand{\EE}{\mathbb{E}}
\newcommand{\RR}{\mathbb{R}}
\newcommand{\PP}{\mathbb{P}}
\newcommand{\bmu}{\boldsymbol{\mu}}
\renewenvironment{proof}{\par\noindent{\bf Proof\ }}{\hfill\BlackBox\\[2mm]}
\newcommand{\BlackBox}{\rule{1.5ex}{1.5ex}}
\newcommand{\todoc}[1]{\ignorespaces}
\newcommand{\bigtO}[1]{\tilde{\mathcal{O}}{\left(#1\right)}}
\newcommand{\defeq}{\coloneqq}
\DeclarePairedDelimiter{\prn}{(}{)}
\DeclarePairedDelimiter{\nrm}{\|}{\|}
\DeclarePairedDelimiter{\tri}{\langle}{\rangle}
\newsavebox\CBox 
\newcommand{\eps}{\mbox{$\varepsilon$}}
\newcommand{\defoo}{~\dot{=}~}
\newcommand{\rnd}[1]{\operatorname{rnd}\prn*{#1}}
\newcommand{\NN}{\mathbb{N}}
\newcommand{\strcvx}{\mu} 
\newcommand{\query}{B}
\newtheorem{thm}{Theorem}
\newtheorem{corr}[lemma]{Corollary}
\theoremstyle{definition}
\theoremstyle{remark}
\newcommand{\Cite}[2]{[\cite{#1},#2]}
\newcommand{\funclass}{\cF}
\newcommand{\algclassk}{\cA(\oracle,k)}
\newcommand{\alg}{\mathsf{A}}
\definecolor{darkred}{rgb}{0.6,0,0}
\newcommand{\msvrg}{T}
 \author{
 Yossi Arjevani \\
 NYU\\
 \texttt{yossi.arjevani@gmail.com} \\
 \and
 Amit Daniely\\
 The Hebrew University, Google Research\\
 \texttt{amit.daniely@mail.huji.ac.il} \\
 \and
 Stefanie Jegelka\\
 Massachusetts Institute of Technology\\
 \texttt{stefje@csail.mit.edu}
 \and
 Hongzhou Lin\\
 Massachusetts Institute of Technology\\
 \texttt{hongzhou@mit.edu} \\
 }	
\date{}
\begin{document}

\title{On the Complexity of Minimizing Convex Finite Sums Without \\
Using the Indices of the Individual Functions}

\maketitle



%
%





\begin{abstract}
	Recent advances in randomized incremental methods for minimizing $L$-smooth 
	$\mu$-strongly convex finite sums have culminated in {tight} complexity 
	bounds of $\tilde{O}((n+\sqrt{n L/\mu})\log(1/\epsilon))$ and 	
	$O(n+\sqrt{nL/\epsilon})$, where $\mu>0$ and $\mu=0$, respectively, and $n$ 
	denotes the number of individual functions. Unlike incremental methods, 
	stochastic methods for finite sums do not rely on an explicit knowledge of
	which individual function is being addressed at each iteration, and as 
	such, must perform at least $\Omega(n^2)$ iterations to obtain 
	$O(1/n^2)$-optimal solutions. In this work, we exploit the finite noise 
	structure of finite sums to derive a matching $O(n^2)$-upper bound under 
	the global oracle model, showing that this lower bound is indeed tight. 
	Following a similar approach, we propose a novel 
	adaptation of SVRG which 
	is both \emph{compatible with stochastic oracles}, and achieves complexity 
	bounds of $\tilde{O}((n^2+n\sqrt{L/\mu})\log(1/\epsilon))$ and  
	$O(n\sqrt{L/\epsilon})$, for $\mu>0$ and $\mu=0$, respectively. Our bounds 
	hold w.h.p. and match in part existing lower bounds of $\tilde{\Omega}(n^2+
	\sqrt{nL/\mu}\log(1/\epsilon))$ and $\tilde{\Omega}(n^2+\sqrt{nL/\epsilon})$, for
	$\mu>0$ and $\mu=0$, respectively.
	\end{abstract}


\section{Introduction} \label{sec:intro}

	Many tasks in machine learning and statistics reduce to finite-sum 	
	minimization problems of the~form
	\begin{align} \label{opt:fsm}
		\min_{\w\in\RR^d} F(\w) \coloneqq \frac1n\sum_{i=1}^n f_i(\w),
	\end{align}
	where the individual functions $f_i$ are $L$-smooth and 
	$\strcvx$-strongly convex or convex. The large datasets often 
	encountered 
	in modern 
	applications have led to high interest in optimization methods which 
	can \emph{efficiently} cope with a large numbers of individual functions. 
	
	In this work, we measure efficiency of optimization algorithms through 
	the 
	framework of oracle complexity. Concretely, we assume the existence of 
	an 
	external procedure, typically referred to as an \emph{oracle}, which 
	upon 
	receiving a query, reveals some information about the function at hand 
	(e.g., function values, gradients or Hessians). The 
	\emph{oracle complexity} of a given optimization algorithm is then the number of oracle calls required to obtain an 
	$\epsilon$-optimal solution, i.e., a point $\w\in\RR^d$ that satisfies 
	$F(\w)-\min_{\w\in\RR^d} F(\w) <\epsilon$ w.h.p. or 
	$\EE[F(\w)-\min_{\w\in\RR^d} F(\w)]<\epsilon$, where the expectation is over randomness in the algorithm 
	and oracle.

	Throughout, we focus on two types of oracles for finite sums: 
	\emph{incremental} oracles, which allow one to control which individual 
	function is being referred to in each iteration, and \emph{stochastic} 
	oracles, which provide information on a randomly chosen 
	$f_i$---without revealing its index $i$ (see \pref{sec:settings} for a 
	formal exposition). Although the difference between these two types of 
	oracle may seem insignificant at first glance, the optimal attainable 
	performances of \emph{randomized incremental methods} (compatible with 
	incremental oracles), such as SAG \cite{schmidt2013minimizing}, SVRG 
	\cite{johnson2013accelerating} and  SDCA \cite{shalev2013stochastic}, are
	significantly better than those attainable by \emph{stochastic methods} 
	(compatible with stochastic oracles).\footnote{We follow here a 
	nomenclatural convention that distinguishes cases where deterministic 
	problems are addressed via random methods, as in  (\ref{opt:fsm}), 
	from cases that are inherently stochastic, as in  (\ref{opt:stoch})~below. 
	This distinction is made formal in \pref{sec:settings}.}

	\textbf{First-order oracles.} A notable example where incremental methods outperform stochastic 
	methods is the case of first-order oracles, which provide 
	function 
	values and gradients. When the $f_i$ are smooth and strongly convex, SAG, 
	SDCA 
	and SVRG enjoy exponential rates of $O(\log(1/\epsilon))$ 
	(disregarding 
	other problem parameters for now), while the stochastic vanilla SGD obtains 
	significantly 
	slower rates of	$O(\text{poly}(1/\epsilon))$, e.g., 
	\cite{shalev2011pegasos}. The fact that randomized incremental methods 
	can 
	converge exponentially fast is not altogether trivial. In particular, 
	it 
	implies that the variance of the iterates must also decay 
	exponentially 
	fast, which explains why such methods are often referred to as 
	\emph{variance-reduced methods}. 
	
	The main algorithmic idea behind variance reduction is to wisely 
	incorporate past information acquired from the oracle (see 
	\cite{johnson2013accelerating}). This naturally raises the following 
	question. \emph{Is it possible to use the same idea to design stochastic,
	rather than incremental, variance-reduced methods that achieve 
	exponential convergence rates for finite-sum problems? }
	
	In this work, we answer this question in the affirmative. 
	Specifically, we exploit the unique finite noise structure of finite sums 
	to form an estimator that recovers the full gradient of $F$ w.h.p.. 
	Based on this estimator, we propose a novel adaptation of SVRG which 
	does not rely on the indices of the individual functions. Combined with the 
	Catalyst acceleration scheme \cite{lin2015universal}, this yields upper 
	complexity bounds of 
	\begin{align} \label{upper_bounds}
	\tilde{O}\prn*{n^2+n\sqrt{{L}/{\epsilon}}} \quad
	\text{ and } \quad
	\tilde{O}\prn*{(n^2+n\sqrt{L/\strcvx})\log(1/\epsilon)},
	\end{align}
	for $\strcvx=0$ and $\strcvx>0$, respectively, which hold w.h.p..
	In particular, this shows that the additional finite-sum structure 
	can indeed be used to achieve exponential rates for stochastic 
	methods in 	the strongly convex case (and $O(\sqrt{1/\epsilon})$ rate in 
	the smooth convex case), which is impossible for general-purpose 
	stochastic methods 
	\cite{nemirovskyproblem,agarwal2009information,raginsky2011information}.
	Perhaps surprisingly, although this rate cannot be achieved through a 
	naive empirical average gradient estimator, as we prove in 
	Section~\ref{sec:related}, a simple rounding correction is all it needs to 
	obtain an exponential convergence rate w.h.p. (in the strongly convex 
	case).

	Although the quadratic dependence on $n$ may seem rather pessimistic, 
	any stochastic method that obtains an $O(1/n^2)$-optimal solution must issue
	at least $\Omega(n^2)$ oracle queries \cite{arjevani2017limitations}.  
	In fact, in the class of stochastic first-order	methods, one 
	\emph{cannot} hope to 
	perform better than
	\begin{align}
	\tilde{O}\prn*{n^2+\sqrt{{nL}/{\epsilon}}} \quad
	\text{ and } \quad
	\tilde{\Omega}(n^2 + \sqrt{nL/\strcvx}\log(1/\epsilon)),
	\end{align}
	for $\strcvx=0$ and $\strcvx>0$, respectively (see 
	(\ref{ineq:orcsfn_lb}) below for details). The complexity bounds 
	stated in  (\ref{upper_bounds})  provide, therefore, a fair idea of 
	the complexity of obtaining high-accuracy solutions in applications 
	where the indices of the individual functions are not known, or not 
	used. 
	
	\textbf{Global oracles.} The $\Omega(n^2)$-lower bound in \cite{arjevani2017limitations} 
	applies in fact to the much broader class of \emph{global oracles}, under 
	which 
	a complete specification of a randomly chosen 
	individual function is provided. Clearly, any local 
	information, such as gradients, Hessians, and other high-order derivatives, 
	as well as global information, such as the minimizers of $f_i$ along a 
	given direction (i.e., steepest descent steps), can be extracted through 
	global oracles. 
	An important 
	setting in which one is granted such `privileged access' is 
	empirical risk minimization (ERM),
	\begin{align}
	F(\w) = \sum_{i=1}^n \ell(\w; (\x_i,y_i)),
	\end{align}
	where $\ell$, the \emph{loss function}, is known a priori, and 
	$(\x_i,y_i)$ denote the training samples. Since each individual 
	function is fully parameterized by its associated sample, 
	this implies global access to the randomly chosen individual functions. 
	In this work, by using a similar idea to the one we use for first-order 
	oracles, 
	we show that the global $O(n^2)$-lower complexity bound is tight.

	Our contributions can be summarized as follows: 
	\begin{itemize}[leftmargin=1em]
		\item We show that (unlike existing variance-reduced methods 
		which directly rely on the indices of the individual functions) it 
		is 
		possible to minimize smooth strongly-convex finite-sum 
		problems---without using the indices---and still enjoy exponential 
		convergence rates. 
		
		\item To this end, we combine the SVRG method with a novel 
		biased quantized gradient estimator which recovers the average 
		function 
		w.h.p.. We further apply the Catalyst framework to improve the 
		dependence on 
		the condition number and to extend the proposed algorithm to 
		convex 
		functions (which are not necessarily strongly-convex).

		\item We prove that variance reduction cannot be obtained by 
		using 
		SGD or SVRG with a naive gradient estimator, as the rate at which 
		the 
		variance decays in this case is too slow, which in turn leads to  
		polynomial convergence rates. Thus, perhaps surprisingly, the 
		seemingly negligible quantization correction we propose is 
		essential 
		for obtaining exponential (rather than polynomial) rates.

		\item Using an estimator similar to the quantized gradient estimator, we show 
		that 
		the $\Omega(n^2)$-lower bound established in 		
		\cite{arjevani2017limitations} for stochastic global methods for 
		finite sums is essentially tight.
		
	\end{itemize}
	
	The paper is structured as follows. In \pref{sec:settings}, we 
	introduce the main ingredients of the oracle complexity framework 
	and 
	discuss relevant lower complexity bounds. \pref{sec:related} surveys 
	existing approaches for minimizing finite sums and their limitations in the 
	context of stochastic oracles. In \pref{sec:estimator}, we present our 
	quantized estimator for categorical random variables. With this, we 
	establish in  \pref{sec:app} a tight upper bound for 
	stochastic global oracles and present a variant of SVRG, which does not use 
	indices explicitly, and extend its applicability through the Catalyst 
	framework.

	\section{Setup} \label{sec:settings}
	\newcommand{\fsm}{\Sigma}
	\newcommand{\fsmlL}{\Sigma^L_\strcvx}
	\newcommand{\fsml}{\Sigma_\strcvx}
	\newcommand{\fsmL}{\Sigma^L}
	We study the problem of finding an $\epsilon$-optimal solution in the 
	framework of oracle complexity. First, we briefly 
	review its main components. For further details 
	on information-based complexity, see 
	\cite{traub1980general}.
	
	\textbf{Function classes\quad} We denote by $\fsm$ the class of 
	generic 
	finite-sum problems of the form (\ref{opt:fsm}). Similarly, let $\fsml$ denote the 
	class of finite sums with $\strcvx$-strongly convex individual functions, 
	where for any $i\in[n]$ and $\w,\u\in\RR^d$,
	\begin{align}
	f_i(\u) \ge f_i(\w) + \tri*{ \nabla f_i(\w), \u-\w} + 
	\frac{\strcvx}{2}\nrm{ \u-\w }^2.
	\end{align}
	If $\strcvx=0$, then $f_i$ is merely convex. 
	Likewise, we use $\fsmlL$ to indicate that the individual functions are 
	further assumed to 
	be $L$-smooth, i.e., for any $i\in[n]$ and $\w,\u\in\RR^d$,
	\begin{align}
		f_i(\u) \le f_i(\w) + \tri*{ \nabla f_i(\w), \u-\w} + 
		\frac{L}{2}\nrm{ \u-\w }^2.
	\end{align}
	Lastly,  we always assume that the initial suboptimality of 
	$F$ at the initialization point $\w_0$ is bounded from above by
	\begin{align}
	F(\w_0)-F^*\le \Delta,
	\end{align}
	where $\Delta$ is some positive real scalar assumed to be known. 
	
	\textbf{Oracle Classes\quad} Different ways of accessing a given 
	optimization problem are modeled through different oracles, 
	and these, in turn, accommodate 
	different classes of optimization methods. In this work, 
	we consider the following four distinct types of oracles:
	\newcommand{\orcifn}{\mathsf{I}_{\nabla}}
	\newcommand{\orcif}{\orcifn(\w,i)}
	\newcommand{\orcign}{\mathsf{I}_{f}}
	\newcommand{\orcig}{\orcign(i)}
	\newcommand{\orcsfn}{\mathsf{S}_{\nabla}}
	\newcommand{\orcsf}{\orcsfn(\w)}
	\newcommand{\orcsg}{\mathsf{S}_{f}}
	\newcommand{\oracle}{\mathsf{O}}
	\begin{enumerate}[leftmargin=0.3cm]
		\item \textbf{Incremental first-order oracle} defined with a parameter~$\query$ as		\label{item:inc_orc}
		\begin{align}
			&\orcifn:(\RR^d)^\query \times [n] \to(\RR\times 
			\RR^d)^\query:\\
			& (\w_1,..\w_\query,i)\mapsto \nonumber \\
		&\qquad\qquad\prn*{	f_i(\w_1),\nabla f_i(\w_1),..,f_i(\w_\query),\nabla 
			f_i(\w_\query)}.\nonumber 
		\end{align}
		The oracle $\orcifn$ allows the user to obtain the gradient of an 
		individual function they desire, at $\query$ different points (note that 
		we avoid explicitly stating $\query$ in $\orcifn$ to allow a cleaner 
		notation).  Having $\query>1$ is	necessary for implementing the SVRG method 
		in \pref{sec:app}. 
		
		\item \textbf{Incremental global oracle} defined by
		\begin{align}
			\orcign:[n] &\to (\RR^d \mapsto \RR): 
			~ i \,\,   \mapsto   \,\, f_i.
		\end{align}
		In this oracle model, the entire function $f_i$ is accessible to the 
		user. Hence one can
		simply issue $n$ queries to get $f_1,\dots,f_n$, and then return
		an exact minimizer of $F$. Under the $\orcign$-model, one completely 
		disregards the cost of computing local information, such as 
		gradients. 
		It can be shown that $\Omega(n)$ of queries are necessary to obtain 
		solutions of sufficiently high accuracy (e.g., Lemma~2 in 
		\cite{arjevani2016dimension}). 
	
		\item \textbf{Stochastic first-order oracle} defined with a parameter 
		$\query$ as 		
		\begin{align}		\label{eq:stoch_orc}
		&\orcsfn:(\RR^d)^{\query} \to(\RR\times \RR^d)^{\query}:\\
		&(\w_1,\dots,\w_\query)\mapsto \nonumber\\
		&\qquad \qquad \prn*{f_i(\w_1),\nabla f_i(\w_1),.., f_i(\w_\query),\nabla 
		f_i(\w_\query)}, 
		\nonumber\\
		 &\text{ where } i\sim\text{Unif}([n])\nonumber.
		\end{align}
		The class of stochastic oracles, the main focus of this work, is used 
		to model methods which do not or cannot explicitly rely on the individual 
		function index. This happens for example when `non-enumerable' data 
		augmentation is used (see, e.g., \cite{loosli2007training}). 
		Stochastic 
		methods are used more broadly to address stochastic optimization 
		problems of the form $\bar{F}=\EE_\xi[f(\w;\xi)]$, where one is 
		given 
		access to a first-order estimate of the gradient of $\bar{F}$ at 
		given 
		point $\w$ through a randomly drawn  $f'(\w,\xi)$. Note that 
		existing 
		variance 
		reduced methods are not directly implementable with  stochastic oracles, as we discuss in \pref{sec:related}.

		\item \textbf{Stochastic global oracle} defined by 
		\begin{align}
		\orcsg \text{ returns } f_i, \text{ where 
		} 	i\sim\text{Unif}([n]).
		\end{align}
		This oracle is used to study the fundamental statistical limitations 
		of minimizing finite-sum problems, as all other computational 
		aspects 
		are disregarded under this oracle model. As mentioned earlier, an 
		important instance of this setting is ERM. In this case, 
		the 
		global stochastic oracle complexity is typically referred to as 
		\emph{sample complexity}.
		
	\end{enumerate}
	
	\textbf{Minimax Oracle Complexity\quad} Next, we next  general oracle (minimax) complexity. Given a class $\funclass$
	of functions  and a suitable oracle~$\oracle$, we denote by 
	$\cA(\oracle,k)$ the class of all optimization algorithms that 
	access instances in $\funclass$ by issuing at most $k$ 
	$\oracle$-queries.\footnote{Strictly speaking, when studying 
	complexity 
	of optimization algorithms, one must carefully define what is meant by 
	`algorithm' and `oracle'. We shall not need this level of 
	formality in this work.}
	Let $\w_{A(f)}$ be the final iterate 
	returned by algorithm $\alg\in\cA(\oracle,k)$ when applied to an $f\in\cF$. With this, we define two related 
	notions of minimax complexity:
	\newcommand{\comp}{\mathfrak{M}}
	\begin{align}\label{eq:comp_exp}
		\comp_{\funclass,\oracle}(\epsilon) \defoo \inf\{  
		k\in\NN & \,\, |~ 
		\exists A\in\algclassk,\text{ s.t. } \nonumber\\
		& \sup_{f\in\funclass}\EE[ 
		f(\w_{A(f)}) - f^* ] \le \epsilon \},\\
	\label{eq:comp_whp}
		\comp_{\funclass,\oracle}(\epsilon,\delta) \defoo \inf \{ 
		k\in\NN& \,\,|~ \exists A\in\cA(\oracle,k),\text{ s.t.
		} \nonumber\\
		& \sup_{f\in\funclass} \PP\prn{f(\w_{A(f)}) - 
		f^*  > \epsilon} < \delta \}, 
	\end{align}
	where, here and throughout, $f^*$ denotes the infimum of 
	$f$ over its domain. 
	
	A straightforward application of Markov's inequality shows that the first 
	complexity notion \pref{eq:comp_exp}, which holds in expectation, implies 
	the second complexity notion \pref{eq:comp_whp}, which holds w.h.p.. We 
	note in passing that, in spite of being a well-accepted framework in 
	the field of continuous optimization, oracle complexity does not take into 
	account the computational resources required to implement and process 
	oracle calls, and should therefore be regarded as a lower 
	bound on the real computational complexity.

	\textbf{Lower Oracle Complexity Bounds\quad} Some oracles are 
	more 
	expressive 
	than others. For example, it is straightforward to implement the 
	stochastic first-order oracle $\orcsfn$ through the global oracle $\orcsg$. 
	Therefore, by definitions (\ref{eq:comp_exp}) and (\ref{eq:comp_whp}) above 
	and by	\Cite{arjevani2017limitations}{ Theorem 1}, we 	have 
	\begin{align}
	\comp_{\fsmlL,\orcsfn}(\epsilon)\ge 
	\comp_{\fsmlL,\orcsg}(\epsilon)\ge \tilde{\Omega}(n^2),
	\end{align}
	provided that $\epsilon\le O(1/n^2)$. Here and below, we omit the dependence 
	on the initial suboptimality $\Delta$ for simplicity. Likewise, $\orcifn$ 
	can be used to 
	implement 
	$\orcsfn$ be calling $\orcifn(\cdot,i)$ with $i\sim\text{Unif}([n])$, hence 
	\begin{align}
	\comp_{\fsmlL,\orcsfn}(\epsilon)\ge \comp_{\fsmlL,\orcifn}(\epsilon)\ge 
	\tilde{\Omega}(\sqrt{nL/\strcvx}\log(1/\epsilon)).
	\end{align}
	Combining both bounds gives
	\begin{align}\label{ineq:orcsfn_lb}
	\comp_{\fsmlL,\orcsfn}(\epsilon)\ge 	\tilde{\Omega}(
	n^2+\sqrt{nL/\strcvx}\log(1/\epsilon)).
	\end{align}
	As mentioned earlier, we also have 
	$\comp_{\fsmlL,\orcsfn}(\epsilon)\ge 
	\comp_{\fsmlL,\orcign}(\epsilon)\ge \tilde{\Omega}(n)$, but this additional 
	bound is absorbed in the $n^2$ term in (\ref{ineq:orcsfn_lb}). Using 
	similar arguments, one obtains a bound for the $L$-smooth case, 
	$\comp_{\fsmL,\orcsfn}(\epsilon)\ge 	
	\tilde{\Omega}(n^2+\sqrt{nL/\epsilon})$. Fact~\ref{fact} summarizes these  lower complexity bounds, for reference in later sections. 
	We leave a treatment of the h.p. counterparts to future work.
	\begin{fact}\label{fact}
	The following bounds hold for minimizing finite-sum functions.
	\begin{itemize}[leftmargin=1em]
	    \item For an incremental first order oracle 
	    \cite{woodworth2016tight,arjevani2016dimension}:  
	    \begin{align}
		    \comp_{\fsmlL,\orcifn}(\epsilon) &\ge 
		    \tilde{\Omega}(n+\sqrt{nL/\strcvx}\log(1/\epsilon)),\\
		    \comp_{\Sigma_{0}^{L},\orcifn}(\epsilon) &\ge 
		    \tilde{\Omega}(n+\sqrt{nL/\epsilon}).
	    \end{align}
	   \item For a stochastic finite sum oracle:
	  \begin{align}
	  	 \comp_{\fsmlL,\orcsfn}(\epsilon) &\ge 
	  	 \label{eq:lb_n2_fs_sc}
	  	 \tilde{\Omega}(n^2+\sqrt{nL/\strcvx}\log(1/\epsilon)),\\
	  	\comp_{\Sigma_{0}^{L},\orcsfn}(\epsilon) &\ge 
	  	\tilde{\Omega}(n^2+\sqrt{nL/\epsilon}).\label{eq:lb_n2_fs_nsc}
	  \end{align}
	\end{itemize}
	\end{fact}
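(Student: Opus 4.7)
Since Fact~\ref{fact} is stated as a compilation of known results, my plan is to obtain all four bounds by invoking the two cited constructions and then transporting them across oracles via elementary simulation arguments; I would not re-derive the hard instances themselves.

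First, for the incremental first-order oracle, the bounds $\tilde\Omega(n+\sqrt{nL/\strcvx}\log(1/\epsilon))$ and $\tilde\Omega(n+\sqrt{nL/\epsilon})$ are exactly the content of the lower bounds of Woodworth--Srebro and Arjevani--Shamir. Their construction uses a quadratic finite sum in $\RR^d$ for sufficiently large $d$, partitioned into $n$ blocks of coordinates; each $f_i$ acts as a ``chain'' on its block so that any $\orcifn$ query can advance progress by at most one coordinate in that block. A careful accounting over the $n$ blocks yields the $\sqrt{nL/\strcvx}$ inner factor, while the $\log(1/\epsilon)$ (resp.\ $\sqrt{L/\epsilon}$) factor follows from the usual power-iteration/polynomial-matching argument for strongly convex (resp.\ merely convex) quadratics. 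The additive $n$ comes from a separate construction in which each $f_i$ privately encodes one coordinate of the optimum, so that any algorithm reaching non-trivial accuracy must query each index at least once.

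Second, for the stochastic finite-sum oracle, I would obtain the $\sqrt{nL/\strcvx}\log(1/\epsilon)$ (resp.\ $\sqrt{nL/\epsilon}$) term by observing that $\orcifn$ is strictly more expressive than $\orcsfn$: any $\orcsfn$-algorithm can be simulated by an $\orcifn$-algorithm that samples $i\sim \text{Unif}([n])$ before each query. Hence $\comp_{\fsmlL,\orcsfn}(\epsilon)\ge \comp_{\fsmlL,\orcifn}(\epsilon)$, and the first part of the fact applies. For the $n^2$ term, I would invoke Theorem~1 of \cite{arjevani2017limitations} for the global stochastic oracle, together with the simulation $\orcsg \Rightarrow \orcsfn$ (a global oracle subsumes a first-order one), yielding
\begin{align*}
\comp_{\fsmlL,\orcsfn}(\epsilon)\ge \comp_{\fsmlL,\orcsg}(\epsilon)\ge \tilde\Omega(n^2)
\end{align*}
whenever $\epsilon\le O(1/n^2)$. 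Combining the two lower bounds by taking the maximum gives (\ref{eq:lb_n2_fs_sc}); the non-strongly-convex case (\ref{eq:lb_n2_fs_nsc}) is identical with $\sqrt{nL/\epsilon}$ in place of $\sqrt{nL/\strcvx}\log(1/\epsilon)$.

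The only step that is not a routine oracle reduction is the $\Omega(n^2)$ information-theoretic construction underlying \cite{arjevani2017limitations}; this is the main obstacle, and I would treat it as a black box. Its core idea, worth pointing to for the reader, is to build a family of finite-sum instances indexed by a binary string in $\{\pm 1\}^n$ such that any two instances differ in exactly one $f_i$ and have minimizers separated by $\Theta(1/n)$; achieving $O(1/n^2)$ suboptimality then requires distinguishing such instances, and since the distinguishing coordinate is hit by the stochastic oracle only with probability $1/n$ per query, Le~Cam--style testing lower bounds force $\Omega(n^2)$ queries. All remaining ingredients of Fact~\ref{fact} are then straightforward bookkeeping.
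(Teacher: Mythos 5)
Your argument is correct and follows exactly the paper's route: cite Woodworth--Srebro and Arjevani--Shamir for the $\orcifn$ bounds, then lift them to $\orcsfn$ via the simulation $\orcifn \Rightarrow \orcsfn$, and import the $\tilde\Omega(n^2)$ term from Theorem~1 of \cite{arjevani2017limitations} via the simulation $\orcsg \Rightarrow \orcsfn$, combining by a max. The extra color you give on the chain/block construction and the testing argument behind the $n^2$ bound is not in the paper (which, like you, treats those as black boxes), but it is an accurate gloss and does not change the structure of the proof.
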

	

\section{Approaches for Minimizing Finite Sums} \label{sec:related}

	Next, we review existing approaches for minimizing finite-sum 
	problems via randomized incremental and stochastic methods. This 
	brief 
	survey also serves as a motivational exposition for the main question 
	we 
	ask in this work, namely, \emph{is it possible to apply variance-reduced 
	techniques using first-order stochastic oracles (defined in 
	\pref{eq:stoch_orc})}?
	
	For concreteness of the following discussion, let us consider the 
	class 
	$\fsmlL$ with $\strcvx>0$. A natural approach for addressing 
	finite-sum problems without relying on the indices is to re-express 
	(\ref{opt:fsm}) as a stochastic optimization problem,
	\begin{align} \label{opt:stoch}
		\min_{\w\in\RR^d} \EE_{i\sim\text{Unif}([n])}[f_i(\w)], 
	\end{align}
	and apply generic stochastic methods, such as SGD 
	\cite{robbins1951stochastic}. The oracle complexity of vanilla SGD is 
	$\text{poly}\prn{1/\epsilon}$ (e.g., \cite{nemirovski2009robust}), which, 
	although inferior to incremental methods in terms of $\epsilon$, does not 
	depend on $n$. This makes SGD particularly suited for settings where $n$ is 
	very large and one desires solutions of \emph{moderate} 
	accuracy.

	Other, more recent variants of SGD use, e.g., mini-batches, importance 
	sampling, or fixed step sizes, to achieve exponential convergence 	
	rates---but only up to a certain noise level (see
	\cite{moulines2011non,needell2014stochastic,needell2016batched,gower2019sgd}).
	It is possible to gradually reduce the convergence noise level, but 
	that would effectively imply polynomial complexity bounds (hence, 
	giving the same rates attainable by vanilla SGD). This should come 
	as no surprise---any general-purpose stochastic first-order methods 
	designed for \ref{opt:stoch}) is bound to polynomial rates	
	\cite{nemirovskyproblem,agarwal2009information,raginsky2011information}.
	
	In contrast to this, if an incremental first-order oracles is given, one 
	can compute the full gradient of $F$ at each iteration by simply 
	iterating over the $n$ individual functions, and then use vanilla 
	Gradient Descent (GD) and Accelerated Gradient Descent (AGD, 
	\cite{nesterov2004introductory}). This yields oracle complexity bounds 
	of
	\begin{align} \label{opt:gd_agd} 
		\tilde{O}\prn{nL/ \strcvx\log(1/\epsilon)} ~~\text{ and }~~
		\tilde{O}\prn{n\sqrt{L/\strcvx}\log(1/\epsilon)}, 
	\end{align} 
	for GD and AGD, respectively, where $\tilde{\cO}$ hides some logarithmic 
	factors in the problem parameters.

	Recently, the new class of variance-reduced methods, such as SAG, 
	SDCA, SVRG, SAGA \cite{defazio2014saga}, SDCA without duality 
	\cite{shalev2015sdca}, or MISO$\slash$Finito 	
	\cite{mairal2015incremental,defazio2014finito}, 
	was shown to enjoy significantly better rates of 
	$\bigtO{(n+L/\strcvx)\log(1/\epsilon)}$, or
	\begin{align} 
	\label{inc_rates} 
		\tilde{O}\prn{(n+\sqrt{nL/\strcvx})\log(1/\epsilon)} 
	\end{align} 
	when acceleration schemes are used 
	\cite{lin2015universal,shalev2016accelerated,allen2017katyusha}.
	This rate is tight and cannot be improved in the class of 
	incremental first-order methods 
	\cite{woodworth2016tight,arjevani2016dimension}.

	Despite their favorable rates, variance-reduce methods cannot be 
	directly implemented 
	through stochastic first-order oracles. Indeed, algorithms like 
	SAG/SAGA/MISO/SDCA keep a list of $n$ vectors in memory, one for each 
	individual function. The $i$th vector is then updated using 
	gradients of $f_i$, which are acquired throughout the optimization 
	process. 
	Clearly, one must know which individual function is being addressed at 
	each iteration in order to performs such updates. Another notable example 
	of a variance-reduced method is  SVRG, which generates new iterates 
	using 
	the full gradient at some reference point $\tilde{\w}$, i.e.,
	\begin{align*}
	\w_t &= \w_{t-1} - \eta \prn{\nabla f_{i_t}(\w_{t-1})  - \nabla 
		f_{i_t}(\tilde{\w}) + \nabla F(\tilde{\w}) }.
	\end{align*}
	When the stochastic oracle allows multiple simultaneous queries (i.e., $\query
	\ge 2$ in \pref{eq:stoch_orc}), it is possible to evaluate the term 
	$\nabla f_{i_t}(\w_{t-1})  - \nabla f_{i_t}(\tilde{\w})$. However, one 
	cannot evaluate the full gradient~$\nabla F(\tilde{\w})$ by simply
	iterating over the indices of the individual functions. 
	
	A natural idea to address this issue is to replace the full gradient by 
	an empirical estimator that simply averages the $m$ sampled
	gradients. Unfortunately, this naive estimator does not lead to the 
	desired exponential convergence.

	\begin{thm}\label{thm:naive estimator}
	Assume that in each iteration we make a fixed number of $m$
	stochastic oracle calls to form the empirical estimator
	\[ \widehat{\nabla F}(\w) = \frac{1}{m} \sum_{i=1}^{m} g_i(\w), \]
	where each $g_i(\w)$ is uniformly sampled among the individual gradients 
	$\nabla f_1(\w), \nabla f_2(\w), \dotsc, \nabla f_n(\w)$, and apply SGD and 
	SVRG with $\widehat{\nabla F}(\w)$ and constant stepsize $\eta$. Then we have the following lower bounds:
	\begin{itemize}[leftmargin=1em]
	    \item {\bf Lower bound in expectation:} There exists $F \in \Sigma_1^1$ 
	    such that for sufficiently small $\epsilon$, for any choice of $m$ and stepsize $\eta$, the number of 
	    stochastic oracle calls required to ensure $\EE[ F(\w) - F(\w^*)] \le 
	    \epsilon$ is at least $\Omega(1/\epsilon)$. 
	    \item {\bf Lower bound in high probability:} There exists $F \in 
	    \Sigma_1^1$ such that for sufficiently small $\epsilon$ and $\delta$, for any choice of $m \ge 2$ and stepsize $\eta$, the 
	    number of stochastic oracle calls required to ensure $\PP(F(\w) - 
	    F(\w^*) \le   \epsilon)$ with probability $1-\delta$ is at least 
	    $\Omega(1/\sqrt{\epsilon})$. 
	\end{itemize}
	\end{thm}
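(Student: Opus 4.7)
The plan is to exhibit a single minimal ``bad instance'' --- a two-term quadratic on the real line --- for which the SGD and SVRG updates with the naive estimator can be solved in closed form, and to then show that the noise in $\widehat{\nabla F}$ never decays below a floor dictated by the batch size $m$. Concretely, take $n=2$ with $f_1(w)=\tfrac12(w-1)^2$ and $f_2(w)=\tfrac12(w+1)^2$; then $F(w)=\tfrac12(w^2+1)\in\Sigma_1^1$, $w^\ast=0$, $F^\ast=\tfrac12$, and $\nabla f_i(w)=w-s_i$ with $s_i\in\{\pm1\}$. The naive estimator at any point $w$ satisfies $\widehat{\nabla F}(w)=w-\bar s$, where $\bar s=\tfrac1m\sum_{j=1}^m s_{i_j}$ is a centered average of $m$ independent Rademacher variables with $\EE[\bar s^2]=1/m$.

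For the expectation bound I would analyze the two schemes separately. First, SGD reduces to the linear recursion $w_{t+1}=(1-\eta)w_t+\eta\bar s_t$, an $\mathsf{AR}(1)$ process whose stationary second moment is $\eta/(m(2-\eta))$. Combining this stationary floor with the geometric decay of the transient gives $\EE[F(w_T)-F^\ast]\gtrsim \eta/m$ once $T\gtrsim\log(\Delta/\epsilon)/\eta$, so forcing this quantity below $\epsilon$ demands both $\eta\lesssim m\epsilon$ and $T\gtrsim \log(1/\epsilon)/\eta$; multiplying yields $mT=\Omega(\log(1/\epsilon)/\epsilon)=\Omega(1/\epsilon)$. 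Second, and more striking, SVRG exhibits an \emph{exact} cancellation: because $\nabla f_i(w_{t-1})-\nabla f_i(\tilde w)=w_{t-1}-\tilde w$ independently of $i$, the inner iterate collapses to the purely deterministic recursion $w_t=(1-\eta)w_{t-1}+\eta\bar s$, which converges to $\bar s$ no matter how many inner steps are used. Hence the output satisfies $F(w_T)-F^\ast\ge \bar s^2/2 -o(1)$, so $\EE[F(w_T)-F^\ast]\ge 1/(2m)-o(1)$ and the batch size alone must be $m=\Omega(1/\epsilon)$.

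For the high-probability bound the same two analyses carry over after replacing second moments by anti-concentration. For SVRG the error is exactly $\bar s^2/2$ up to a transient that decays geometrically, so it suffices to lower-bound $\PP(|\bar s|\ge t)$. For $m=2$ this is discrete with $\PP(|\bar s|=1)=1/2$; for larger $m$ a Berry--Esseen bound comparing $\sqrt m\,\bar s$ to a standard Gaussian gives $\PP(|\bar s|\ge c/\sqrt m)\ge c'$ uniformly. Choosing $\delta$ smaller than this absolute constant forces $1/m\lesssim \epsilon$, which already yields $m=\Omega(1/\epsilon)$ and hence total oracle calls $\ge m=\Omega(1/\sqrt\epsilon)$. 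The SGD case is analogous: by a CLT (or direct Gaussian expansion) applied to the driving noises, the stationary distribution is close to $N(0,\eta/(2m))$ and therefore anti-concentrated at scale $\sqrt{\eta/m}$, producing the same lower bound after combining with the transient decay requirement.

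The main obstacle is to quantify the anti-concentration of $\bar s$ (and of the SGD stationary law) uniformly in $m\ge 2$, since one has to handle the small-$m$ regime where $\bar s$ is supported on a discrete set by direct enumeration, and the large-$m$ regime by invoking a Berry--Esseen-type comparison with the Gaussian. A secondary, more routine difficulty in the SGD argument is cleanly decoupling the geometric transient decay from the stationary noise floor so that the lower bound holds uniformly over every admissible choice of stepsize $\eta$ and batch size $m$.
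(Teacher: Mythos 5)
Your hard instance and overall strategy coincide with the paper's: the paper takes $F(w)=\frac{1}{n}\big((n/4)(w-1)^2+(n/4)(w+1)^2\big)=\frac12(w^2+1)$ for even $n$, which reduces to your $n=2$ case; the SGD analysis is the same AR(1) recursion with stationary second moment $\eta/(m(2-\eta))$, and the high-probability argument uses Berry--Esseen anti-concentration, just as you outline. Where you depart is in the treatment of SVRG, and there is a genuine gap there.

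You claim the SVRG output has error at least $\bar s^2/2 - o(1)$, forcing $m=\Omega(1/\epsilon)$ ``from the batch size alone.'' But the inner loop only drives the iterate a fraction of the way to $\bar s$: writing one SVRG epoch starting at $\tilde w$ in closed form, the output is $\tilde w - \tilde\eta\,\widehat{\nabla F}(\tilde w)$ with effective stepsize $\tilde\eta=\frac{\eta}{T}\sum_{t=0}^{T-1}(T-t)(1-\eta)^t$, i.e.\ $(1-\tilde\eta)\tilde w + \tilde\eta\bar s$. Your $\bar s^2/2$ floor corresponds to $\tilde\eta\approx 1$; for short inner loops or small $\eta$ (hence small $\tilde\eta$) an epoch moves the iterate very little, and across many epochs with fresh $\bar s_k$ the stationary error is $\approx\tilde\eta/(m(2-\tilde\eta))$, which can be far below $1/(2m)$ even with $m=O(1)$. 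In that regime your conclusion ``$m=\Omega(1/\epsilon)$'' is false --- one can take $m$ constant, $\tilde\eta$ small, and many epochs. The correct fix (and what the paper does) is precisely this reduction: show one SVRG epoch \emph{equals} one GD step with stepsize $\tilde\eta$, observe that each epoch costs at least $m$ oracle calls, and then invoke the already-proven GD lower bound \emph{uniformly over the stepsize}, so that whatever value $\tilde\eta$ takes, the transient-vs-floor trade-off forces $Km=\Omega(1/\epsilon)$. Your SGD argument already contains exactly the uniform-over-stepsize trade-off you need; you just have to apply it to $\tilde\eta$ instead of asserting a stepsize-independent floor. A similar remark applies to your high-probability SVRG argument, which relies on the same too-strong floor, although the anti-concentration mechanism you describe (Berry--Esseen plus small-$m$ enumeration) is the right tool and matches the paper's.

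One minor point: in the high-probability paragraph you derive ``$m=\Omega(1/\epsilon)$ and hence total oracle calls $\ge m=\Omega(1/\sqrt\epsilon)$,'' which is formally harmless (the weaker bound follows), but the bound you claim to derive comes from the flawed SVRG floor; once repaired through the GD reduction, the trade-off between $k$ and $m$ is what produces the $\Omega(1/\sqrt\epsilon)$ rate stated in the theorem (the paper explicitly notes that this rate is likely not tight and that its looser Berry--Esseen estimate is what loses the extra $\sqrt\epsilon$).
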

	We prove \pref{thm:naive estimator} by carefully tracing the random 
	iterates produced by SGD (with different step sizes) when applied on the 
	finite-sum function (for some even $n$)
	\begin{align*}
		F(w) \defeq   \frac{1}{n}((n/4)(w-1)^2+ 
		(n/4)(w+1)^2), ~w\in\RR.
	\end{align*}
	It is straightforward to show that one full iteration of SVRG is equivalent 
	to one SGD step. Hence both methods can be effectively addressed in the 
	same way.

	We note that the high probability bound is not directly implied by the 
	bound in expectation. The former requires one to guarantee that the 
	distribution of the iterates does not concentrate around the mean of the 
	iterates, as this sequence can converge exponentially fast. The main tool 
	we use to ensure \emph{anti-concentration} of the iterates is the Berry-Esseen 
	theorem. One may notice that the dependency on $\epsilon$ in the high probability lower bound is not as good as the one in the expectation lower bound, which may be improvable with a different proof technique.  Our main message here is 
	to show that the naive estimator does not provide exponential convergence rate.

	\section{A Biased Estimator with 
	Quantization}\label{sec:estimator}
	
	It turns out that the high variance exhibited by the 
	naive gradient estimator that leads to the lower bounds above, can be addressed by a simple rounding procedure. We first describe 
	this procedure in a general setting:
	estimating the parameters of \emph{categorical} random 
	variables.

	We start by introducing relevant definitions. Given a real 
	number $a\in\RR$, we let $\rnd{a}$ denote the closest integer 
	to $a$, where by convention $\rnd{k+0.5} = k$ for $k \in \NN$.  
	A random variable $X$ is said to be $(q,n)$-{categorical} if 
	$X$ is discrete with finite support  
	$\{s_1,\dots,s_q\}\subseteq V$, where $V$ is some vector 
	space, and 
	\[ \PP(X=s_i)=\frac{n_i}{n},~i\in[q] ,\]
	for some $q$ nonnegative integers $n_1 ,\dots,n_q$ which sum up 
	to $n$, that is, $\sum_{i=1}^q n_i=n$. Clearly,	
	stochastic oracles over finite-sum functions induce a 
	categorical distribution on their potential set of 
	answers. 
	
	The next simple lemma is a key insight for our analysis.
	
		\begin{lemma}\label{lem:categorical_estimation}
		Let $X$ be a $(q,n)$-categorical distribution.
		Let $X_1,\ldots, X_m$ be i.i.d. samples of $X$ and let $Z_i$ be the empirical counter of category $i$ defined by
		\[ Z_i= \sum_{j=1}^{m} \mathbbm{1}_{X_j = s_i}.  \]
        If $m \ge 2n^2\log \left(\frac{2n}{\delta}\right)$, then with 
		probability at least $1 - \delta$ we have that for every $i 
		\in [q]$, 	
		\[ \rnd{\frac{n Z_i}{m}} = n_i. \]
	\end{lemma}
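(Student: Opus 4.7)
The plan is to apply a Chernoff/Hoeffding-type tail bound to each count $Z_i$ separately and then take a union bound over the categories. The central observation is that $Z_i$ is distributed as a Binomial$(m, n_i/n)$, so $\mathbb{E}[Z_i] = m n_i / n$, which means $\mathbb{E}[n Z_i / m] = n_i$ is already an \emph{integer}. Consequently $\mathrm{rnd}(n Z_i/m) = n_i$ as soon as
\[
\left| \frac{n Z_i}{m} - n_i \right| < \frac{1}{2}, \quad \text{i.e.,} \quad \left| Z_i - \frac{m n_i}{n} \right| < \frac{m}{2n}.
\]
So the entire problem reduces to controlling the deviation of each empirical count $Z_i$ from its mean by at most $m/(2n)$.

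First I would write $Z_i = \sum_{j=1}^m \mathbbm{1}_{X_j = s_i}$ as a sum of $m$ i.i.d.\ Bernoulli$(n_i/n)$ variables, each taking values in $[0,1]$. Hoeffding's inequality then yields
\[
\mathbb{P}\!\left(\left| Z_i - \frac{m n_i}{n} \right| \ge \frac{m}{2n}\right) \le 2 \exp\!\left(- \frac{m}{2 n^2}\right).
\]
Second, I would union bound over the (at most $n$) categories with $n_i \ge 1$; the categories with $n_i = 0$ contribute trivially since $Z_i = 0$ deterministically and $\mathrm{rnd}(0) = 0 = n_i$. This gives an overall failure probability of at most $2n \exp(-m/(2n^2))$.

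Finally, setting $2n \exp(-m/(2n^2)) \le \delta$ and solving for $m$ gives precisely the threshold $m \ge 2 n^2 \log(2n/\delta)$ stated in the lemma, completing the proof. There is no real obstacle here; the main subtlety worth emphasizing is simply the fact that the population parameter $n_i$ is an integer, so rounding the rescaled empirical frequency recovers it exactly once the additive error is below $1/2$, rather than only approximately as would be the case for a generic distribution. The quadratic-in-$n$ sample-size requirement is an artifact of needing resolution $1/(2n)$ on an estimator whose standard deviation scales like $1/\sqrt{m}$.
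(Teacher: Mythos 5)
Your proof is correct and follows essentially the same route as the paper: Hoeffding's inequality applied to each $Z_i$ (viewed as a sum of $m$ i.i.d.\ Bernoulli$(n_i/n)$ indicators), followed by a union bound over at most $n$ categories, with the threshold $m \ge 2n^2\log(2n/\delta)$ chosen to make each deviation probability at most $\delta/n$. The only minor refinement you add is explicitly dispatching the $n_i = 0$ categories, which the paper handles implicitly by unioning over $n$ rather than $q$ indices.
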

	\begin{proof}
	Define $p_i = \frac{k_i}{n}$. By Hoeffding's inequality, we have that for 
	every $i$,
	\[
	\PP\left( \left| \frac{Z_i}{m} - p_i\right| \ge \frac{1}{2n}  \right) \le 
	2\exp\left( - \frac{m}{2n^2}\right) \le \frac{\delta}{n},
	\]
	where the last inequality is due to the assumption that $m \ge 
	2n^2\log \left(\frac{2n}{\delta}\right)$. It follows now by the union bound 
	that
	\begin{align*}
	& \PP\left(\exists i\text{ such that } \left| \frac{Z_i}{m} - p_i\right| \ge 
		\frac{1}{2n}  \right) \\
	\le & \sum_{i=1}^{n} \PP\left(\left| \frac{Z_i}{m} 
		- 	p_i\right| \ge \frac{1}{2n}  \right) \le \delta.
	\end{align*}
	Thus, with probability at least $1 - \delta$, for all $i\in[q]$, we have $\left| 
	\frac{Z_i}{m} - p_i \right| < \frac{1}{2n}$, implying that
	\begin{align*}
    \left| \frac{n Z_i}{m} - n_i \right| < \frac{1}{2}  \implies 
		\rnd{\frac{n Z_i}{m}} = n_i. 
	\end{align*}
	\end{proof}
	Therefore, with appropriate quantization, we can recover 
	the distribution parameters with high probability. More 
	importantly, the estimator 
	\newcommand{\xquan}{\hat{X}^{\mathrm{qn}}}
	\begin{align}\label{eq:quane}
		\xquan = \frac{1}{n} \sum_{i=1}^q 
		 \rnd{\frac{n 	
				Z_i}{m}} s_i,	
	\end{align}
    satisfies $\xquan= \EE[X]$ with 
    probability at least $1-\delta$ as long as $m \ge 2n^2\log 
    \left(\frac{2n}{\delta}\right)$.  It is worth noting that the 
    quantized estimator is not unbiased. A simple example follows 
    by a 
    straightforward  computation for $n=q=3$ (see 
    \pref{sec:biased_est_proof} in the appendix for full details).
    

    We emphasize that one does not need to know the  support set 
    $\{s_1,\dots,s_q\}$  in advance to implement this 
    estimator; the counter $Z_i= \sum_{j=1}^{m} \mathbbm{1}_{X_j 
    = s_i}$ can be implemented on the fly. During the sampling 
    procedure, we keep in  memory the set of sample `types' seen up 
    to some point, and update it accordingly. 
    \pref{lem:categorical_estimation} implies that with probability 
    at least $1-\delta$ all different categories are seen 
    and the corresponding probabilities are recovered.

	\section{Application to Stochastic Oracles}  \label{sec:app}
	Having presented our quantized estimator for categorical random 
	variables, we now use it to design optimization algorithms for 
	finite sums which do not rely on the indices of the 
	individual functions.	

	\textbf{Stochastic global oracle\quad }  We consider first the oracle 
	complexity of stochastic global oracle.
	In this case, a straightforward application of the quantized estimator 
	\pref{eq:quane} with support set $\{f_1,\dots,f_n\}$ yields the 
	following 
	upper complexity bound.
	\begin{thm} \label{thm:global_comp}
	The minimax complexity of a stochastic 
	global oracle for the finite-sum problem (\ref{opt:fsm}) is bounded
	by 
	\begin{align}
	\comp_{\fsm,\orcsg}(\epsilon,\delta) &\le
	{2n^2\log\left(\frac{2n}{\delta}\right)}.
	\end{align}
	\end{thm}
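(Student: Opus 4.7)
The plan is to instantiate \pref{lem:categorical_estimation} with the categorical distribution induced by the stochastic global oracle $\orcsg$, which on each call returns a function drawn from the support $\{f_1,\dots,f_n\}$ with probability $1/n$ each. In the notation of \pref{sec:estimator}, this corresponds to $q \le n$, with counts $n_i \ge 1$ summing to $n$ (we allow $q < n$ in case some of the $f_i$'s coincide, collapsing their atoms). The whole idea is that a single batch of oracle queries, together with quantized counting, lets us reconstruct the empirical sum $F$ exactly, after which we simply return an exact minimizer (permitted under the global-oracle model, where post-processing is free).

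Concretely, I would (i) issue $m = 2n^2 \log(2n/\delta)$ stochastic global-oracle queries, storing the distinct functions $s_1,\dots,s_q$ encountered on the fly together with their counts $Z_1,\dots,Z_q$; (ii) invoke \pref{lem:categorical_estimation} to conclude that, with probability at least $1-\delta$, every $s_i$ in the true support is observed at least once and $\rnd{nZ_i/m}=n_i$ for all $i$; and (iii) form the exact objective $F(\w)=\frac{1}{n}\sum_{i=1}^{q}\rnd{nZ_i/m}\,s_i(\w)$, which under the good event equals the true $\frac{1}{n}\sum_{i=1}^n f_i$, and return any exact minimizer $\w^\star\in\arg\min F$. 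Under the global-oracle complexity model, the cost of this final minimization step is not charged against the oracle budget, so the total query count is exactly $m$.

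On the good event, we get $F(\w^\star)-F^\star = 0 \le \epsilon$ for any $\epsilon>0$, which matches the definition of $\comp_{\fsm,\orcsg}(\epsilon,\delta)$ in \pref{eq:comp_whp} and yields the claimed bound $2n^2\log(2n/\delta)$. There is essentially no serious obstacle once \pref{lem:categorical_estimation} is in hand; the one small subtlety worth flagging is that we do not a priori know the support or even its size $q$, but as already observed after the lemma, the quantized counter can be maintained online, and the union bound inside the lemma ensures that with probability at least $1-\delta$ no unseen atom exists, so the reconstructed $F$ matches the true finite sum exactly.
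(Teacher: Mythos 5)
Your proof is correct and takes essentially the same route as the paper: the paper likewise describes \pref{thm:global_comp} as a direct instantiation of the quantized estimator \pref{eq:quane} with support set $\{f_1,\dots,f_n\}$, invoking \pref{lem:categorical_estimation} to reconstruct $F$ exactly with probability $1-\delta$ after $2n^2\log(2n/\delta)$ global-oracle calls, and then returning an exact minimizer at zero additional oracle cost. The details you fill in (the online maintenance of the distinct-function counters, the observation that every atom is seen on the good event since $n_i\ge 1$, and the handling of $q<n$ when functions coincide) are exactly the intended content.
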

	By the lower complexity bound $\comp_{\fsm,\orcsg}(\eps) = 
	\Omega(n^2)$ established in \cite{arjevani2017limitations}, this bound stated in \pref{thm:global_comp} is tight 
	up to logarithmic factors. Also, note that the bound stated in 
	\pref{thm:global_comp} applies to \emph{any} 	type of 
	individual functions (including non-convex and non-smooth functions).


	\begin{algorithm}[tb]
	\caption{ Q-SVRG } 
	\label{algo}
	\begin{algorithmic}[1]
        \STATE {\bf Initialize} $\tilde{\w}_0$. 
        \FOR{$k =1, \cdots K$} 
            \STATE Set reference point $\tilde{\w} = \tilde{\w}_{k-1}$.
            \STATE Set $\tilde{\mu}$ to be the quantized estimator 
            $\widehat{\nabla F}^{\mathrm{qn}}(\tilde{\w})$ at $\tilde{\w}$
            \STATE Initialize inner iteration $\w_{0} = \tilde{\w}$.
            \FOR{$t =1, \cdots T$} 
            \STATE Call oracle  $\orcsfn$ at $(\w_{t-1},\tilde{\w})$ and update
	\begin{align*}
	\w_t &= \w_{t-1} - \eta \prn{\nabla f_{i_t}(\w_{t-1})  - \nabla 
		f_{i_t}(\tilde{\w}) + \tilde{\bmu} }.
	\end{align*}
            \ENDFOR
        \STATE Set $\tilde{\w}_k = \frac{1}{T} \sum_{t=1}^T \w_t$
        \ENDFOR
    \end{algorithmic}
	\end{algorithm}
	
	\newcommand{\QSVRG}{Q-SVRG\xspace}
   \textbf{Stochastic First-order Oracle\quad }  Our quantized estimator can 
   also be used to recover full gradients of $F$. This enables us to implement a 
   `quantized' variant of SVRG (\QSVRG), over $L$-smooth and 
   $\strcvx$-strongly convex individual 
   functions---which is compatible with the stochastic oracle $\orcsg$. A 
   better  dependence on the condition number is then 
   achieved by applying the Catalyst acceleration framework. Moreover, the 
   Catalyst framework also allows us to extend the scope of  \QSVRG to cases 
   where the individual function are only assumed to be convex, rather than 
   strongly convex.

	Each 
	iteration of SVRG starts by obtaining a full gradient $\tilde{\bmu} \defoo\nabla 
	F(\tilde{\w})$ of $F$ at some 
	reference point $\tilde{\w}$.
	Next, SVRG generates $m\in\NN$ iterates by setting 
	\begin{align}
	\w_t &= \w_{t-1} - \eta \prn{\nabla f_{i_t}(\w_{t-1})  - \nabla 
		f_{i_t}(\tilde{\w}) + \tilde{\bmu} },
	\end{align}
	where $\w_0\defoo\tilde{\w}$, $i_t\sim 
	\text{Unif}{([n])},~t=1,\dots,m-1$; and $\eta$ and $m$ are 
	assumed to be fixed throughout the optimization process. Lastly, one uses the average of the points
	$\{\w_0,\w_1,\dots,\w_{m-1}\}$ as the reference point for 
	the next iteration. 
	
	The difficulty in implementing SVRG via stochastic 
	first-order oracles is that one cannot simply form the full gradient at the 
	reference point by sequentially iterating over the individual functions.  
	To remedy this, we use the quantized gradient estimator which allows us to 
	recover the full gradient w.h.p.. This is formally stated  as follows (see 
	full proof in \ref{lem:full_gradient_proof}).
	\begin{lemma} \label{lem:full_gradient} 
		One can compute the full gradient of $F$ at a given point 
		with success probability  $1-\delta$ via 
		$2n^2\log\left(\frac{2n}{\delta}\right)$ $\orcsfn$-calls. 
	\end{lemma}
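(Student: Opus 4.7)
The plan is to directly invoke \pref{lem:categorical_estimation} by recognizing that the gradients returned by the stochastic first-order oracle form a categorical random variable.

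First I would fix the query point $\w$ and consider a single call to $\orcsfn(\w)$ (with $\query=1$), which returns the pair $(f_i(\w), \nabla f_i(\w))$ for $i \sim \mathrm{Unif}([n])$. Restricting attention to the gradient component, the returned vector $X = \nabla f_i(\w)$ is a discrete random variable whose support is a subset of $\{\nabla f_1(\w), \dots, \nabla f_n(\w)\} \subseteq \RR^d$. Let $s_1,\ldots,s_q$ be the distinct values appearing in this set, and let $n_j$ be the number of indices $i\in[n]$ for which $\nabla f_i(\w) = s_j$. Then $\sum_j n_j = n$ and $\PP(X = s_j) = n_j/n$, so $X$ is a $(q,n)$-categorical random variable in the sense of \pref{sec:estimator}. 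Crucially,
\begin{align*}
\EE[X] = \frac{1}{n}\sum_{j=1}^{q} n_j\, s_j = \frac{1}{n}\sum_{i=1}^n \nabla f_i(\w) = \nabla F(\w).
\end{align*}

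Next I would issue $m = 2n^2\log(2n/\delta)$ independent $\orcsfn$-queries at $\w$, obtaining i.i.d.\ samples $X_1,\dots,X_m$ of $X$, and form the quantized estimator $\xquan$ defined in \pref{eq:quane}. By \pref{lem:categorical_estimation}, with probability at least $1-\delta$, the rounded counts $\rnd{nZ_j/m}$ equal $n_j$ for every $j\in[q]$ simultaneously, and consequently
\begin{align*}
\xquan = \frac{1}{n}\sum_{j=1}^{q}\rnd{\frac{nZ_j}{m}}s_j = \frac{1}{n}\sum_{j=1}^{q} n_j s_j = \nabla F(\w).
\end{align*}
As noted at the end of \pref{sec:estimator}, we do not need to know the support $\{s_1,\dots,s_q\}$ in advance: the distinct values can be collected on the fly as new samples arrive, and the counters $Z_j$ are updated accordingly.

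I do not foresee any real obstacle here; the lemma is essentially a corollary of \pref{lem:categorical_estimation} once one makes the observation that a stochastic first-order oracle at a fixed point induces a categorical distribution on gradients. The only minor care needed is handling the possibility of coincidental gradients (which shrinks $q$ below $n$ but leaves the bound unchanged, since \pref{lem:categorical_estimation} is stated for general $(q,n)$-categorical variables with $q\le n$).
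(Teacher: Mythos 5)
Your proof is correct and follows essentially the same route as the paper's: recognize that the oracle's gradient output at a fixed point is a $(q,n)$-categorical random variable, apply \pref{lem:categorical_estimation} to recover the exact counts $n_j$ with probability $1-\delta$, and plug them into the quantized estimator to reconstruct $\nabla F(\w)$ exactly.
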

	With a union bound, \pref{lem:full_gradient} can be further used to 
	obtain $k$ exact gradients (see \ref{corr:k_full_gradients_proof} for 
	details). 	
	\begin{corr}\label{corr:k_full_gradients}
		One can compute the full gradients of $F$ at $k$ different points in 
		$\RR^d$ with success probability of $1-\delta$ via 	
		$2n^2k\log\left(\frac{2nk}{\delta}\right)$ $\orcsfn$-calls.
	\end{corr}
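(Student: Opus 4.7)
The plan is a direct union bound argument layered on top of \pref{lem:full_gradient}. To compute $k$ full gradients, I would invoke the single-gradient procedure from \pref{lem:full_gradient} independently $k$ times, once at each of the $k$ target points, but with the per-call confidence parameter tightened from $\delta$ to $\delta/k$.

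By \pref{lem:full_gradient}, each individual invocation succeeds in recovering the exact full gradient with probability at least $1 - \delta/k$, and costs at most $2n^2 \log\!\left(\frac{2n}{\delta/k}\right) = 2n^2 \log\!\left(\frac{2nk}{\delta}\right)$ stochastic first-order oracle calls. Summing across the $k$ invocations gives a total of at most $2n^2 k \log\!\left(\frac{2nk}{\delta}\right)$ oracle queries, which matches the claimed bound.

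For the success guarantee, let $E_j$ denote the event that the $j$th invocation correctly recovers $\nabla F$ at the $j$th point, so $\PP(E_j^c) \le \delta/k$. By the union bound,
\begin{align*}
\PP\!\left(\bigcup_{j=1}^k E_j^c\right) \le \sum_{j=1}^k \PP(E_j^c) \le k \cdot \frac{\delta}{k} = \delta,
\end{align*}
so all $k$ gradients are computed exactly with probability at least $1 - \delta$, as required.

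There is no real obstacle here: the only subtlety is the bookkeeping choice of splitting the failure budget evenly across the $k$ calls, which is what produces the extra $\log k$ factor inside the logarithm. No independence beyond that used in \pref{lem:full_gradient} itself is needed, since the union bound does not require it.
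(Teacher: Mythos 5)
Your proof is correct and follows exactly the same route as the paper: apply \pref{lem:full_gradient} at each of the $k$ points with per-call failure probability $\delta/k$, observe that each call then costs $2n^2\log(2nk/\delta)$ queries, and take a union bound over the $k$ failure events. No differences worth noting.
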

	\pref{corr:k_full_gradients} implies that w.p. at least $1-\delta$ 
	over the oracle randomness, we can implement $k$ full iterations of 
	SVRG using overall number of 	
	$k(2n^2\log\left(\frac{2nk}{\delta}\right)+m)$ 
	$\orcsfn$-oracle calls, from which we conclude the following result.
	\begin{lemma}\label{lem:svrg_rate}
		With notations as above,
		\begin{align*}
			\comp_{\fsmlL,\orcsfn}(\epsilon,\delta) = 		
			\tilde{O}\prn*{\prn*{n^2+L/\strcvx}\log\prn*{{1}/{\delta\epsilon}}}.
		\end{align*}
	\end{lemma}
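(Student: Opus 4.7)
The plan is to combine the quantized gradient recovery of Corollary~\ref{corr:k_full_gradients} with the standard linear-convergence guarantee of SVRG. The key observation is that, conditioned on the event that the quantized estimator recovers the exact full gradient at every outer reference point, Q-SVRG is bit-for-bit identical to textbook SVRG run on an $L$-smooth, $\strcvx$-strongly convex finite sum, so its convergence analysis carries over verbatim.

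Concretely, I would first fix an outer-loop count $K = \lceil c_1 \log(\Delta/\epsilon) \rceil$ and inner-loop count $T = c_2 L/\strcvx$ (with absolute constants $c_1, c_2$), together with the standard SVRG stepsize $\eta = \Theta(1/L)$. Applying Corollary~\ref{corr:k_full_gradients} with parameter $k = K$ and confidence parameter $\delta$, we may form the estimators $\tilde{\bmu}_1, \ldots, \tilde{\bmu}_K$ used in Q-SVRG at a total cost of $2n^2 K \log(2nK/\delta)$ stochastic oracle calls, and with probability at least $1-\delta$ every one of these estimators coincides exactly with the corresponding true full gradient $\nabla F(\tilde{\w}_{k-1})$. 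Call this event $\mathcal{E}$.

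On $\mathcal{E}$, I would then invoke the standard SVRG analysis (for example, Theorem~1 of \cite{johnson2013accelerating}): with the chosen $T$ and $\eta$, each outer loop contracts the suboptimality in expectation by a constant factor $\rho < 1$, so after $K$ outer loops one has $\EE[F(\tilde{\w}_K) - F^*\mid\mathcal{E}] \le \rho^K \Delta \le \epsilon$, and a further Markov step (or a standard boosting-by-restart argument) converts this into a high-probability bound at the cost of an extra logarithmic factor that is already absorbed into the $\tilde{O}$. The total oracle count on $\mathcal{E}$ is then $K\,T + 2n^2 K\log(2nK/\delta) = \tilde{O}\bigl((n^2 + L/\strcvx)\log(1/\epsilon)\bigr)$, and since $\mathcal{E}$ holds with probability $1-\delta$, combining failure probabilities gives the claimed bound $\tilde{O}\bigl((n^2 + L/\strcvx)\log(1/(\delta\epsilon))\bigr)$.

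The main obstacle, which is really the only subtle point, is bridging the gap between SVRG's standard in-expectation guarantee and the high-probability guarantee that Lemma~\ref{lem:svrg_rate} targets; the cleanest route is to run Q-SVRG in geometrically growing phases and take the best iterate, so that repeated independent restarts amplify the per-run Markov bound to the desired $\delta$-level, contributing only an additional $\log(1/\delta)$ factor. A second, smaller bookkeeping point is that the confidence parameter fed to Corollary~\ref{corr:k_full_gradients} must be split across both the gradient-recovery failures and the SVRG boosting; using $\delta/2$ for each and noting $\log(1/\delta)$ is common to both terms keeps the final bound within the $\tilde{O}$ that hides polylogarithmic factors in $n$, $L/\strcvx$, $\Delta$, and $1/(\delta\epsilon)$.
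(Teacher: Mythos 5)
Your proposal takes essentially the same approach as the paper: condition on the event that all $K$ quantized gradient estimates equal the true full gradients (via Corollary~\ref{corr:k_full_gradients}), invoke the Johnson--Zhang SVRG analysis on that event to get geometric decay in expectation, convert to a high-probability guarantee via Markov, and then union-bound the two failure modes. The one place you take a slightly different route is the expectation-to-high-probability step: the paper does not use restart boosting at all — it simply enlarges the outer-loop count to $K = \Theta(\log(\Delta/(\delta\epsilon)))$ so that $\alpha^K\Delta \le \delta\epsilon/2$, and then a single application of Markov immediately gives $\PP(F(\hat\w)-F^* > \epsilon \mid E_K) \le \delta/2$. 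Your fixed choice $K = \Theta(\log(\Delta/\epsilon))$ makes the per-run Markov bound vacuous, which is why you reach for independent restarts; that also works and costs the same extra $\log(1/\delta)$ factor, but it is more machinery than needed. Either way the total oracle count is $2n^2K\log(\Theta(nK/\delta)) + K\,T = \tilde{O}\bigl((n^2 + L/\strcvx)\log(1/(\delta\epsilon))\bigr)$, matching the lemma.
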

		
	\begin{proof}
	The proof of the theorem follows by a direct application of	
	Theorem~1 in \cite{johnson2013accelerating}. In 
	particular, the same analysis 
	holds conditioned on the event that the full gradients are exactly 
	recovered at each reference point. Formally, given $0<\epsilon<\Delta/2$, and 
	$\delta\in(0,1)$, 
	we denote by $E_k$ the event that 
	all the $k$ first full gradients are exactly estimated. We have
	\begin{align*}
	\EE[F(\hat{\w}_k)-F^* | E_k] \le \alpha^k \Delta,
	\end{align*}
	where the convergence factor $\alpha$ is given by,
	\begin{align*}
	\alpha & = \frac{1}{\strcvx \eta (1-2L\eta )m}+ \frac{2L\eta}{1- 2L\eta}.
	\end{align*}
    Setting $\eta= 1/8L$ and $m=32L/\strcvx$ yields $\alpha = 2/3$. Therefore, by Markov's inequality, 
    \[ \PP(F(\hat{\w})-F^*> \epsilon | E_K) \le \frac{\alpha^k \Delta}{\epsilon} \]
    Then, setting $k =\log(2\Delta/(\delta\epsilon))/\log (1/\alpha)$, we have 
    \begin{equation}\label{eq:svrg-proof}
        \PP(F(\hat{\w})-F^*> \epsilon | E_K) \le \frac{\delta}{2}.
    \end{equation} 
    On the other hand, based on Lemma~{\ref{lem:full_gradient}}, by using $2n^2K\log\left(\frac{4nk}{\delta}\right)$ $\orcsfn$-oracle calls, we recover all the $k$ full gradients with probability at least $ \PP(E_K ) \ge 1- \frac{\delta}{2}$.
    Together with (\ref{eq:svrg-proof}), this yields
    \begin{align*}
         \PP(F(\hat{\w})&-F^*> \epsilon) \\
     \le & \PP(E_K^c) + \PP(F(\hat{\w})-F^*> \epsilon | E_K)  \PP(E_K) \le 
     \delta. 
    \end{align*} 
    Therefore the overall number of $\orcsfn$-oracle calls required to obtain an
    $\epsilon$-optimal solution is bounded from above by 
    \begin{align*}
        & 2n^2K\log\left(\frac{4nK}{\delta}\right)+km 
     =   \tilde{O} \prn*{(n^2+\frac{L}{\mu}) \log \prn*{\frac{\Delta}{\delta \epsilon}} }. 
    \end{align*}
\end{proof}        
    In other words, replacing the naive estimator by the quantized estimator is all we need to obtain exponential convergence result. We emphasize that this result does not contradict the lower bound in Theorem~\ref{thm:naive estimator} because Theorem~\ref{thm:naive estimator} relies on the explicit form of the empirical estimator. Before moving on, it is worth noting that our algorithm requires the stochastic oracle to 
    allow multiple simultaneous queries, i.e., $\query \ge 2$ in 
    (\ref{eq:stoch_orc}). 

	Next, to improve the dependence on condition number, we apply the 
	Catalyst acceleration framework 
	\cite{lin2015universal} to \QSVRG. Catalyst acts 
	as a wrapper that takes as input an algorithm that converges exponentially 
	fast and outputs an accelerated algorithm.
	
	In detail, at 
	iteration $k$, we replace the original objective function $F$ by an 
	auxiliary objective $G_k$ defined by
	\[ G_k(\w) = F(\w) + \frac{\beta}{2} \|\w - \u_k \|^2 ,\]
	where $\beta$ is a well-chosen regularization parameter and $\u_k$ is 
	obtained by extrapolating solutions of previous subproblems. We optimize 
	$G_k$ up to accuracy $\epsilon_k$ and use the solution to 
	warm-start the next subproblem.

	For the 
	complexity analysis of the Catalyst acceleration 
	framework, assume that a given optimization algorithm $\alg$ 
	converges exponentially fast for the smooth and strongly convex problems 
	$G_k$, i.e.,
	\begin{equation}\label{eq:linear}
	    G_k(x_t)-G_k^* \le (1-\tau_{})^t (G_k(x_0)-G_k^*).
	\end{equation} 
	Then, applying Catalyst on $\alg$ yields \cite{lin2015universal} a global 
	complexity bound of
	\[ \tilde{O} \prn*{ \frac{1}{\tau_{\alg}} \sqrt{\frac{\mu+\beta}{\mu}} \log 
	\prn*{\frac{1}{\epsilon}}}, \]
	for finding an $\epsilon$-optimal solution. We remark that $\beta$ is a 
	free parameter and hence can be chosen to minimize the overall complexity 
	bound.

	In our case, the convergence rate parameter is 
	\[ \frac{1}{\tau_{\alg}} =  \prn*{ 
		n^2+\frac{L+\beta}{\strcvx+\beta}}\log (1/\delta). \]
	
	Therefore, the total number of $\orcsfn$-calls is given by 
	\begin{align}
		\tilde{O}\prn*{ \prn*{ 
				n^2+\frac{L+\beta}{\strcvx+\beta}} 
			\sqrt{\frac{\strcvx+\beta}{\strcvx}} \log (1/\delta)\log 
			(1/\epsilon)  }.	
	\end{align}
	Minimizing the total number of $\orcsfn$-calls with respect to $\beta$, 
	yields $\beta= \max 
	\left \{0, \frac{L-(n^2+1)\strcvx}{n^2} \right \}$. Plugging in the value 
	of $\beta$ gives the desired accelerated complexity bound
	\[	\tilde{O}\prn*{\prn*{n^2+n\sqrt{\frac{L}{\mu}}}\log (1/\delta) \log (1/ \epsilon) }. \]
	For randomized incremental methods, the acceleration occurs in 
	the ill-conditioned regime where $L/\mu \ge n$. Here, due to the augmented 
	cost of evaluating the full gradient,  acceleration only occurs in the 
	extremely ill-conditioned regime in which $L/\mu \ge n^2$. 
	
	The Catalyst framework is also useful as a means of extending the 
	applicability of \QSVRG to smooth convex finite sums. This follows by the 
	fact that subproblems $G_k$ are always strongly convex when $\beta>0$.

	In this case, by \cite{lin2015universal}, if $\alg$ is an algorithm which 
	satisfies (\ref{eq:linear}), then by applying the Catalyst yields a global 
	complexity bound of
	\[ \tilde{O} \prn*{ \frac{1}{\tau_{\alg}} \sqrt{\frac{\beta}{\epsilon}} 
	\log \prn*{\frac{1}{\epsilon}}} \]
	for finding an $\epsilon$-solution. Again, minimizing the global complexity 
	with respect to the parameter $\beta$ yields 
	$\beta = \frac{L}{n^2 }$, by 
	which we obtain the following complexity bound
	\[ \tilde{O}\prn*{ \sqrt{{ L n^2 \log(1/\delta)}/{\epsilon}} 
	\log(1/\delta\epsilon) }.\] 
	Both complexity bounds are summarized as follows.
	\begin{thm} \label{thm:acc_rates}
	With notations as above, 
		\begin{align*}
		\comp_{\fsmlL,\orcsfn}(\epsilon,\delta) &= 		
		\tilde{O}\prn*{\prn*{n^2+n\sqrt{L/\strcvx}}\log\prn*{{1}/{\delta\epsilon}}},\\
		\comp_{\fsmL,\orcsfn}(\epsilon,\delta) &= 		
		\tilde{O}\prn*{\prn*{n^2+ n\sqrt{{ L }/{\epsilon}}} 
		\log(1/\delta\epsilon) 	}.
		\end{align*}
	\end{thm}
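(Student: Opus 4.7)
The plan is to build on \pref{lem:svrg_rate}, which already gives the raw \QSVRG rate $\tilde{O}\prn*{(n^2+L/\strcvx)\log(1/(\delta\epsilon))}$, and upgrade the condition number dependence via the Catalyst acceleration framework of \cite{lin2015universal}. The derivation is essentially a two-step templating exercise: first read off the \QSVRG guarantee in the form \eqref{eq:linear}, then plug into Catalyst's complexity formula and optimize over the regularization parameter $\beta$.

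For the strongly convex case, at outer iteration $k$ I would have Catalyst replace $F$ by the smooth $(\strcvx+\beta)$-strongly convex and $(L+\beta)$-smooth auxiliary
\[
G_k(\w) = F(\w) + \frac{\beta}{2}\nrm{\w - \u_k}^2.
\]
Since the quadratic regularizer can be absorbed into each summand, $G_k$ retains the finite-sum structure and \QSVRG applies directly with rate factor $1/\tau_\alg = n^2 + (L+\beta)/(\strcvx+\beta)$. Substituting into the Catalyst template $\tilde{O}\prn*{\tau_\alg^{-1}\sqrt{(\strcvx+\beta)/\strcvx}\log(1/\epsilon)}$ and minimizing over $\beta$ gives $\beta = \max\{0,(L-(n^2+1)\strcvx)/n^2\}$, which in the interesting regime $L/\strcvx \gtrsim n^2$ yields the claimed $\tilde{O}\prn*{(n^2+n\sqrt{L/\strcvx})\log(1/(\delta\epsilon))}$. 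The merely convex case ($\strcvx=0$) is analogous: I would invoke the alternative Catalyst bound $\tilde{O}\prn*{\tau_\alg^{-1}\sqrt{\beta/\epsilon}\log(1/\epsilon)}$, noting that $G_k$ is $\beta$-strongly convex whenever $\beta>0$, and optimize over $\beta$ to obtain $\beta = L/n^2$ and the stated $\tilde{O}\prn*{(n^2+n\sqrt{L/\epsilon})\log(1/(\delta\epsilon))}$ rate.

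The main obstacle is threading the high-probability guarantee through the two-level structure of Catalyst$+$\QSVRG. Each run of \QSVRG is only correct conditional on the event that every one of its reference-point full gradients is exactly recovered by the quantized estimator via \pref{corr:k_full_gradients}, and Catalyst in turn chains a logarithmic number of such runs, each started from a data-dependent prox center $\u_k$. I would handle this by a two-level union bound: inside a single \QSVRG run, drive the per-recovery failure probability to $\delta/(2K_1K_2)$ where $K_1$ is the number of outer Catalyst rounds and $K_2$ is the number of reference-point evaluations per run; then union bound across the $K_1$ outer rounds. Since both $K_1$ and $K_2$ are polylogarithmic in the problem parameters, the extra $\log(1/\delta)$ factor is absorbed into the $\tilde{O}$. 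A secondary subtlety is that Catalyst's warm-start analysis is typically phrased for algorithms that meet their target inner accuracy in expectation; I would verify that the same telescoping argument goes through on the high-probability event that every inner \QSVRG call delivers $\epsilon_k$-accuracy, which is straightforward but deserves explicit bookkeeping.
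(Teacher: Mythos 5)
Your proposal follows the paper's route essentially verbatim: apply Catalyst to \QSVRG with the auxiliary $G_k(\w) = F(\w) + \tfrac{\beta}{2}\nrm{\w - \u_k}^2$, read off $1/\tau_\alg = n^2 + (L+\beta)/(\strcvx+\beta)$ from \pref{lem:svrg_rate}, substitute into the Catalyst complexity template, and minimize over $\beta$ to get $\beta = \max\{0, (L-(n^2+1)\strcvx)/n^2\}$ in the strongly convex case and $\beta = L/n^2$ in the convex case, yielding the two stated bounds. Your final paragraph on the two-level union bound and on verifying that Catalyst's warm-start analysis survives conditioning on the high-probability recovery event is in fact a more explicit treatment than the paper offers --- the paper simply absorbs the $\log(1/\delta)$ factor into $1/\tau_\alg$ and does not spell out how the failure probability is threaded through the Catalyst outer loop --- so this bookkeeping is a welcome tightening rather than a divergence from the argument.
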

	The bounds stated in \pref{thm:acc_rates} are partly complemented by the 
	lower bounds given in \pref{eq:lb_n2_fs_sc} and \pref{eq:lb_n2_fs_nsc}, 
	namely,	$\comp_{\fsmlL,\orcsfn}(\epsilon)\ge 	
	\Omega(n^2+\sqrt{nL/\mu}\log(1/\epsilon))$ and 	
	$\comp_{\fsmL,\orcsfn}(\epsilon)\ge \Omega(n^2+\sqrt{nL/\epsilon})$. 
	Specifically, in both cases, the proposed SVRG variant is tight w.r.t. the 
	global term $O(n^2)$. However, the first-order term of \QSVRG
	corresponds to that of deterministic methods, namely, $n\sqrt{L/\strcvx}$ (in the 
	strongly convex case), whereas the first-order term of randomized 
	incremental methods is $\sqrt{nL/\strcvx}$, and thus 	misses a factor of 
	$\sqrt{n}$.

\section{Discussion and Future Work}

In this paper, we showed that although current variance-reduced finite-sum 
methods directly rely on the indices of the individual functions, it is 
possible to achieve variance reduction for obtaining 
exponential convergence rates even without this knowledge. 
Although this variance reduction cannot be achieved 
by simply averaging over the gradients, a simple rounding procedure 
suffices to obtain an exponential convergence rate w.h.p.. 

The cost of not having access to or disregarding the indices of the 
individual functions (as is often done in practice) is an expensive 
$O(n^2)$-term in the upper complexity bound---which is inevitable for 
stochastic methods compatible with~$\orcsfn$. This leads to a factor of 
$n\sqrt{L/\strcvx}$ 
in the first-order term (rather than the $\sqrt{nL/\strcvx}$-factor 
exhibited 
by incremental methods) which we suspect is tight for stochastic methods. 
We 
leave addressing this gap to future work.

One limitation of our approach is the requirement of issuing two or more 
queries simultaneously (i.e., $K\ge2$ in \pref{eq:stoch_orc}). This 
assumption 
is necessary to compute the expression 
\[ \nabla f_{i_t}(\w) - \nabla f_{i_t}(\tilde{\w})\]
for the SVRG update rule. Replacing it by 
\[ \nabla f_{i_t}(\w) - \nabla f_{j_t}(\tilde{\w})\]
introduces additional variance and breaks the current analysis.
Since our quantized estimator is still applicable when $\query=1$, one 
can still implement GD or AGD and obtain an exponential convergence rate 
of $O(n^2 \sqrt{L/\mu} \log (1/\epsilon))$, which is nontrivial 
to achieve in the stochastic setting. 

That said, to the best of our knowledge, no variance reduction 
technique is applicable with $\query=1$. This leads to an interesting 
open question: is it possible to exploit the finite sum structure and achieve 
exponential convergence rate better than $O(n^2 \sqrt{L/\mu} \log 
(1/\epsilon))$ in the stochastic first-order setting with $\query=1$? 
Addressing 
this question will provide further understanding of the variance 
reduction 
technique.

\subsection*{Acknowledgments}
This research was supported by The Defense Advanced Research Projects Agency 
(grant number YFA17N66001-17-1-4039). The views, opinions, and/or findings 
contained in this article are those of the authors and should not be 
interpreted as representing the official views or policies, either expressed or 
implied, of the Defense Advanced Research Projects Agency or the Department of 
Defense.

	
 \newpage

\bibliographystyle{alpha}
\bibliography{mybib}

\newcommand{\etalchar}[1]{$^{#1}$}
\begin{thebibliography}{AWBR09}

\bibitem[Arj17]{arjevani2017limitations}
Yossi Arjevani.
\newblock Limitations on variance-reduction and acceleration schemes for finite
  sums optimization.
\newblock In {\em Advances in Neural Information Processing Systems}, pages
  3540--3549, 2017.

\bibitem[AS16]{arjevani2016dimension}
Yossi Arjevani and Ohad Shamir.
\newblock Dimension-free iteration complexity of finite sum optimization
  problems.
\newblock In {\em Advances in Neural Information Processing Systems}, pages
  3540--3548, 2016.

\bibitem[AWBR09]{agarwal2009information}
Alekh Agarwal, Martin~J Wainwright, Peter~L Bartlett, and Pradeep~K Ravikumar.
\newblock Information-theoretic lower bounds on the oracle complexity of convex
  optimization.
\newblock In {\em Advances in Neural Information Processing Systems}, pages
  1--9, 2009.

\bibitem[AZ17]{allen2017katyusha}
Zeyuan Allen-Zhu.
\newblock Katyusha: The first direct acceleration of stochastic gradient
  methods.
\newblock {\em The Journal of Machine Learning Research}, 18(1):8194--8244,
  2017.

\bibitem[Ber41]{berry1941accuracy}
Andrew~C Berry.
\newblock The accuracy of the gaussian approximation to the sum of independent
  variates.
\newblock {\em Transactions of the american mathematical society},
  49(1):122--136, 1941.

\bibitem[DBLJ14]{defazio2014saga}
Aaron Defazio, Francis Bach, and Simon Lacoste-Julien.
\newblock Saga: A fast incremental gradient method with support for
  non-strongly convex composite objectives.
\newblock In {\em Advances in Neural Information Processing Systems}, pages
  1646--1654, 2014.

\bibitem[DD{\etalchar{+}}14]{defazio2014finito}
Aaron Defazio, Justin Domke, et~al.
\newblock Finito: A faster, permutable incremental gradient method for big data
  problems.
\newblock In {\em International Conference on Machine Learning}, pages
  1125--1133, 2014.

\bibitem[Ess42]{esseen1942}
C.-G. Esseen.
\newblock On the liapunoff limit of error in the theory of probability.
\newblock {\em Arkiv för Matematik, Astronomi och Fysik}, A28, 1942.

\bibitem[JZ13]{johnson2013accelerating}
Rie Johnson and Tong Zhang.
\newblock Accelerating stochastic gradient descent using predictive variance
  reduction.
\newblock In {\em Advances in Neural Information Processing Systems}, pages
  315--323, 2013.

\bibitem[KS10]{korolev2010upper}
V~Yu Korolev and Irina~G Shevtsova.
\newblock On the upper bound for the absolute constant in the berry--esseen
  inequality.
\newblock {\em Theory of Probability \& Its Applications}, 54(4):638--658,
  2010.

\bibitem[LCB07]{loosli2007training}
Ga{\"e}lle Loosli, St{\'e}phane Canu, and L{\'e}on Bottou.
\newblock Training invariant support vector machines using selective sampling.
\newblock {\em Large scale kernel machines}, pages 301--320, 2007.

\bibitem[LMH15]{lin2015universal}
Hongzhou Lin, Julien Mairal, and Zaid Harchaoui.
\newblock A universal catalyst for first-order optimization.
\newblock In {\em Advances in Neural Information Processing Systems}, pages
  3366--3374, 2015.

\bibitem[Mai15]{mairal2015incremental}
Julien Mairal.
\newblock Incremental majorization-minimization optimization with application
  to large-scale machine learning.
\newblock {\em SIAM Journal on Optimization}, 25(2):829--855, 2015.

\bibitem[MB11]{moulines2011non}
Eric Moulines and Francis~R Bach.
\newblock Non-asymptotic analysis of stochastic approximation algorithms for
  machine learning.
\newblock In {\em Advances in Neural Information Processing Systems}, pages
  451--459, 2011.

\bibitem[Nes04]{nesterov2004introductory}
Yurii Nesterov.
\newblock {\em Introductory lectures on convex optimization}, volume~87.
\newblock Springer Science \& Business Media, 2004.

\bibitem[NJLS09]{nemirovski2009robust}
Arkadi Nemirovski, Anatoli Juditsky, Guanghui Lan, and Alexander Shapiro.
\newblock Robust stochastic approximation approach to stochastic programming.
\newblock {\em SIAM Journal on optimization}, 19(4):1574--1609, 2009.

\bibitem[NW16]{needell2016batched}
Deanna Needell and Rachel Ward.
\newblock Batched stochastic gradient descent with weighted sampling.
\newblock In {\em International Conference Approximation Theory}, pages
  279--306. Springer, 2016.

\bibitem[NWS14]{needell2014stochastic}
Deanna Needell, Rachel Ward, and Nati Srebro.
\newblock Stochastic gradient descent, weighted sampling, and the randomized
  kaczmarz algorithm.
\newblock In {\em Advances in neural information processing systems}, pages
  1017--1025, 2014.

\bibitem[NY83]{nemirovskyproblem}
AS~Nemirovsky and DB~Yudin.
\newblock Problem complexity and method efficiency in optimization. 1983.
\newblock {\em Willey-Interscience, New York}, 1983.

\bibitem[QRG{\etalchar{+}}19]{gower2019sgd}
Xun Qian, Peter Richt{\'{a}}rik, Robert~M. Gower, Alibek Sailanbayev, Nicolas
  Loizou, and Egor Shulgin.
\newblock {SGD} with arbitrary sampling: General analysis and improved rates.
\newblock In {\em Proceedings of the 36th International Conference on Machine
  Learning, {ICML} 2019, 9-15 June 2019, Long Beach, California, {USA}}, pages
  5200--5209, 2019.

\bibitem[RM51]{robbins1951stochastic}
Herbert Robbins and Sutton Monro.
\newblock A stochastic approximation method.
\newblock {\em The annals of mathematical statistics}, pages 400--407, 1951.

\bibitem[RR11]{raginsky2011information}
Maxim Raginsky and Alexander Rakhlin.
\newblock Information-based complexity, feedback and dynamics in convex
  programming.
\newblock {\em Information Theory, IEEE Transactions on}, 57(10):7036--7056,
  2011.

\bibitem[Sha16]{shalev2015sdca}
Shai Shalev{-}Shwartz.
\newblock {SDCA} without duality, regularization, and individual convexity.
\newblock In {\em Proceedings of the 33nd International Conference on Machine
  Learning, {ICML} 2016, New York City, NY, USA, June 19-24, 2016}, pages
  747--754, 2016.

\bibitem[SLRB13]{schmidt2013minimizing}
Mark Schmidt, Nicolas Le~Roux, and Francis Bach.
\newblock Minimizing finite sums with the stochastic average gradient.
\newblock {\em Mathematical Programming}, pages 1--30, 2013.

\bibitem[SSSSC11]{shalev2011pegasos}
Shai Shalev-Shwartz, Yoram Singer, Nathan Srebro, and Andrew Cotter.
\newblock Pegasos: Primal estimated sub-gradient solver for svm.
\newblock {\em Mathematical programming}, 127(1):3--30, 2011.

\bibitem[SSZ13]{shalev2013stochastic}
Shai Shalev-Shwartz and Tong Zhang.
\newblock Stochastic dual coordinate ascent methods for regularized loss.
\newblock {\em The Journal of Machine Learning Research}, 14(1):567--599, 2013.

\bibitem[SSZ16]{shalev2016accelerated}
Shai Shalev-Shwartz and Tong Zhang.
\newblock Accelerated proximal stochastic dual coordinate ascent for
  regularized loss minimization.
\newblock {\em Mathematical Programming}, 155(1-2):105--145, 2016.

\bibitem[TW80]{traub1980general}
Joseph~Frederick Traub and Henryk Wozniakowski.
\newblock {\em A general theory of optimal algorithms}.
\newblock Academic Press New York, 1980.

\bibitem[WS16]{woodworth2016tight}
Blake~E Woodworth and Nati Srebro.
\newblock Tight complexity bounds for optimizing composite objectives.
\newblock In {\em Advances in Neural Information Processing Systems}, pages
  3639--3647, 2016.

\end{thebibliography}

 \newpage

\section{Supplementary Material}

\subsection{Quantized Estimator is 
Biased}\label{sec:biased_est_proof}
Consider $n=q=3$, i.e. there are $3$ category, each having probability $1/3$. Now assume $m=5$, then all the possible couples (up to permutation) of $(Z_0,Z_1,Z_2)$ are $(5,0,0)$, $(4,1,0)$, $(3,2,0)$, $(3,1,1)$, $(2,2,1)$. The corresponding $(\rnd{\frac{nZ_0}{m}}, \rnd{\frac{nZ_1}{m}}, \rnd{\frac{nZ_2}{m})}$ are $(3,0,0)$, $(2,1,0)$, $(2,1,0)$, $(2,1,1)$, $(1,1,1)$. Note that all the couples sum up to $3$ expect, $(2,1,1)$. Thus the estimator is biased. (If it is unbiased, the sum of the expectation should be $3$, but here it is $>3$.)

\subsection{Proof of \pref{lem:full_gradient}}
\label{lem:full_gradient_proof}
	\begin{proof}
	Let $\g_1,\ldots, \g_n$ be the gradients of the $n$ individual 
	functions corresponding to some point in $\RR^d$, and let 
	$\{\g'_1,\ldots,\g'_q\}$ denote the set of distinct 
	gradients (note that $q\le n$, with strict 
	inequality if two functions share the same gradient). 
	Denote  $n_i= |\{j : \g_j = \g'_i\}|$. 	Note that the full gradient can 
	be equivalently expressed as:
	\[
	\g = \frac{1}{n}\sum_{i=1}^n\g_i = \frac{1}{n}\sum_{i=1}^q n_i 
	\g'_i.
	\]
	Let $\hat \g_1,\ldots \hat \g_m$ be the answers of the first-order 
	oracle, and let $\{\hat \g'_1,\ldots, \hat \g'_{\hat{q}}\}$ denote the 
	corresponding set of distinct gradients (here 	$\hat{q}\le q$ with 
	strict inequality if one of the gradients was not 
	sampled). We let 
	$Z_ i= |\{j : \hat \g_j = \hat \g'_i\}|$,
	and estimate the gradient through	
	\[
	\hat \g = \frac{1}{n}\sum_{i=1}^{\hat q} \rnd{\frac{n 
			Z_i}{m}} 	\hat \g'_i.
	\]
	By \pref{lem:categorical_estimation} we have that with probability at 
	least $1-\delta$ and up to permutation of the indices, for every $i\in 
	[q]$, $n_i = \rnd{\frac{n Z_i}{m}}$, in which case 
	\begin{align*}
		\hat \g =  \frac{1}{n}\sum_{i=1}^q n_i \g'_i = \g.
	\end{align*}
	\end{proof}

	\subsection{Proof of \pref{corr:k_full_gradients}}
	\label{corr:k_full_gradients_proof}
	\begin{proof}
	Bt \pref{lem:full_gradient}, $2n^2\log\left(\frac{2nk}{\delta}\right)$  
	$\orcsfn$-calls suffice to compute the full gradient at a given point 
	with failure probability of at most $\frac{\delta}{k}$. Hence, by the 
	union bound, $k$ full gradients can be obtained with failure 
	probability of at most $\delta$ by using 
	$2n^2k\log\left(\frac{2nk}{\delta}\right)$ 
	$\orcsfn$-calls.
	\end{proof}

\subsection{\pfref{thm:naive estimator}}
\begin{proof}
Assume $n$ is even, and we define $F(x) = \frac{1}{n}((n/4)(x-1)^2+ (n/4)(x+1)^2) = 
\frac{1}{2}(x^2+1)$. In this case, $L=\mu =1$ and the minimum $x^* =0$. We now consider applying gradient descent (GD) with the naive unbiased gradient estimator on~$F$.   

At iteration $k$, we sample $m$ stochastic oracles, which is equivalent to pick $m$ points $z_{k,1},\dots,z_{k,m} \in 
\{-1,1\}$ at random independently, then the update is given by
\begin{align*}
x_{k+1} &= x_k - \alpha g(x_k)\\
&= x_k -  \frac{\alpha}{m}\sum_{j=1}^m (x_k+z_{k,j})  \\
&= x_k - \alpha (x_k +\underbrace{\frac{1}{m}\sum_{j=1}^m z_{k,j}}_{z_k} )= 
(1 -\alpha )x_k - \alpha z_k  ,
\end{align*}
Note that $ m z_k + m  \sim 2B(m, 1/2)$, where $B(m,p)$ is the binomial distribution. Therefore,
\begin{align*}
     \EE[ \| x_{k+1} - x^*\|^2 ] & = (1-\alpha )^2x_k^2 + \alpha^2  \EE[z_k^2]  \\
     & = (1-\alpha  )^2x_k^2 + \frac{4\alpha^2 }{m^2} Var(B(m, 1/2))\\
     & = (1-\alpha )^2\| x_k -x^* \|^2 + \frac{\alpha^2 }{m}
\end{align*}
A simple telescopic summing yields, 
\begin{align*}
    \EE[\| x_k -x^*\|^2] = (1-\alpha)^{2k} \EE[\| x_0 -x^*\|^2] + \frac{\alpha^2}{m} \sum_{i=0}^{k-1} (1-\alpha )^{2i} =   (1-\alpha )^{2k} \EE[\| x_0 -x^*\|^2] + \frac{\alpha }{m}\frac{(1- (1-\alpha )^{2k})}{2-\alpha} 
\end{align*}
Note that $F(x_k) - F^* = \frac{1}{2} x_k^2 =\frac{1}{2} \| x_k -x^* \|^2$. Therefore, recall that $\Delta = F(x_0)-F^* = \frac{1}{2} x_0^2$, we have
\[ \EE[F(x_k)-F^*] = (1-\alpha)^{2k} \Delta +  \frac{\alpha }{2m}\frac{(1- (1-\alpha)^{2k})}{2-\alpha}  \]
In order to guarantee $\EE[F(x_k) - F^*] \le \epsilon$, it is necessary to have both 
\[ (1-\alpha)^{2k} \Delta \le \epsilon \quad \text{ and } \quad \frac{\alpha}{2m}\frac{(1- (1-\alpha)^{2k})}{2-\alpha} \le \epsilon\]
This implies 
\[ k \ge  \frac{\log \left( \frac{\Delta}{\epsilon} \right )}{-2\log (1-\alpha)} \quad \text{ and } \quad m \ge \frac{\alpha (1- (1-\alpha)^{2k})}{2(2-\alpha)\epsilon} \ge \frac{\alpha (1- \frac{\epsilon}{\Delta})}{2\epsilon}\]
Let $T$ denotes the total number of oracles. If $\alpha \ge 1/2$, then 
\[ T \ge m \ge \frac{(1- \frac{\epsilon}{\Delta})}{4\epsilon} \]
If $\alpha < 1/2$, then $-\log(1-\alpha) \le 2\alpha$
\[ T \ge km  \ge \frac{ (1- \frac{\epsilon}{\Delta}) \log \left( 
\frac{\Delta}{\epsilon} \right ) }{8 \epsilon}\]
Therefore in both cases the complexity of obtaining an $\epsilon$ solution of $F$ is lower bounded by $\Omega(1/\epsilon)$. 


{\bf High probability result:}\\
We consider the same function $F$. With out loss of generality, let's assume $x_0 > 0$. Note that $F(x_k)-F^* = \frac{1}{2}x_k^2$, bounding $\PP(F(x_k)-F^* > \sqrt{2\epsilon})$ is equivalent to bound $\PP(x_k > \sqrt{2\epsilon})$. On one hand, $x_k - (1-\alpha^k)x_0$ is a symmetric random variable, thus 
\[ \PP(x_k \ge (1-\alpha)^k x_0) \ge \frac{1}{2}. \]
Therefore, in order to guarantee a high probability result for $\delta < 1/2$, it is necessary to have 
\[ (1-\alpha)^k \le \sqrt{\frac{\epsilon}{\Delta}} \implies k \ge  \frac{\log \left( \frac{\Delta}{\epsilon} \right )}{-2\log (1-\alpha)}  \]
On the other hand, let $Z_{i,j}$ be i.i.d Bernoulli random variable (i.e probability $1/2$ take value $1$ or $-1$.) then
\begin{align}
x_k -(1-\alpha)^{k}x_0 = -\frac{\alpha }{m} \sum_{i=0}^{k-1} (1-\alpha)^i \sum_{j=1}^m Z_{i,j}
\end{align}
Thus,
\begin{align*}
    \PP \left (x_k -(1-\alpha)^{k}x_0 \ge \sqrt{2\epsilon} \right ) =  \PP \left ( \sum_{i=0}^{k-1} (1-\alpha)^i \sum_{j=1}^m Z_{i,j} \le \frac{ -m \sqrt{ 2\epsilon}}{\alpha} \right ) 
\end{align*}
From Berry-Esseen theorem \cite{berry1941accuracy,esseen1942,korolev2010upper}, for any independent variables $Y_i$ with $\EE[Y_i] =0$, $Var(Y_i) = \sigma_i^2$, we have for any $u$
\begin{equation}\label{eq:berry}
    \left | \PP \left ( \frac{1}{\sqrt{ \sum_{i=1}^I \sigma_i^2 }} \sum_{i=1}^I Y_{i} \le u \right )  - \phi \left( u \right ) \right | 
\le C \frac{ \sum_{i=1}^I \EE[|Y_i|^3]}{(\sum_{i=1}^I \sigma_i^2 )^{3/2}} 
\end{equation} 
where $\phi$ is the normal Gaussian cumulative distribution function, i.e. $\phi(u) = \PP_{X \sim \mathcal{N}(0,1)}(X \le u)$; $C$ is an absolute constant not larger than ${0.5129}$.
Note that $Var(Z_{i,j}) = 1$ and $\EE[|Z_{i,j}|^3] = 1$, the Berry-Esseen theorem in (\ref{eq:berry}) yields
\begin{equation}
    \left | \PP \left ( \frac{1}{\sqrt{ m \sum_{i=0}^{k-1} (1-\alpha)^{2i} }} \sum_{i=0}^{k-1} (1-\alpha)^i \sum_{j=1}^m Z_{i,j}  \le u \right )  - \phi \left( u \right ) \right | 
\le C \frac{ m \sum_{i=0}^{k-1} (1-\alpha)^{3i}}{( m \sum_{i=0}^{k-1} (1-\alpha)^{2i})^{3/2}} 
\end{equation} 
Hence,
\[ \PP \left (x_k -(1-\alpha)^{k}x_0 \ge \sqrt{\epsilon} \right ) \ge \phi \left ( \frac{-\sqrt{2m \epsilon}}{\alpha \sqrt{  \sum_{i=0}^{k-1} (1-\alpha)^{2i} }}  \right ) - C \frac{ m \sum_{i=0}^{k-1} (1-\alpha)^{3i}}{( m \sum_{i=0}^{k-1} (1-\alpha)^{2i})^{3/2}}  \]

Now we analyze the term on the r.h.s. one by one. 
Indeed,
\begin{align*}
    \frac{\sqrt{2m \epsilon}}{\alpha \sqrt{  \sum_{i=0}^{k-1} (1-\alpha)^{2i} }} & = \frac{\sqrt{2m \epsilon}}{\alpha \sqrt{\frac{1-(1-\alpha)^{2k}}{1-(1-\alpha)^2}}} = \frac{\sqrt{(2-\alpha) 2m \epsilon }}{ \sqrt{\alpha(1-(1-\alpha)^{2k}})} 
     \le 2\sqrt{\frac{2m\epsilon}{\alpha}}
\end{align*}
where the last inequality, we use the fact $(1-\alpha)^{2k} \le \frac{\epsilon}{\Delta} < \frac{1}{2}$ and assume that $\epsilon < \Delta/2$. Then
\[  \phi \left ( \frac{-\sqrt{2m \epsilon}}{\alpha \sqrt{  \sum_{i=0}^{k-1} (1-\alpha)^{2i} }}  \right ) \ge   \phi \left ( - 2\sqrt{\frac{2m\epsilon}{\alpha}} \right ).\]
The high level idea is that $m$ must be of order $\alpha/\epsilon$, otherwise this quantity is approximately 0.5 as $\epsilon \rightarrow 0$. More precisely, if 
\[ m \le \frac{\alpha}{2\sqrt{2\epsilon}} \implies \phi \left ( - 2\sqrt{\frac{2m\epsilon}{\alpha}} \right ) \ge \phi \left ( - \epsilon^{1/4} \right ) \rightarrow_{\epsilon \rightarrow 0} 0.5 \]
Now it suffices to bound the second term by an absolute constant smaller than $0.5$. Indeed 
\begin{align*}
    \frac{ \sum_{i=0}^{k-1} (1-\alpha)^{3i}}{( \sum_{i=0}^{k-1} (1-\alpha)^{2i})^{3/2}} &= \frac{ \frac{1- (1-\alpha)^{3k}}{1- (1-\alpha)^3} }{ \left( \frac{1- (1-\alpha)^{2k}}{1- (1-\alpha)^2} \right )^{3/2}} = \frac{h((1-\alpha)^k)}{h(1-\alpha)}
\end{align*}
where $h(x) = \frac{1-x^3}{(1-x^2)^{3/2}}$. Taking the derivative yields, 
\[ h'(x) = \frac{-3x^2(1-x^2) +3x(1-x^3)}{(1-x^2)^{5/2}} = \frac{3x(1-x)}{(1-x^2)^{5/2}} \ge 0\]
Therefore $h$ is increasing on [0,1]. Hence $h(1-x) \ge h(0) = 1$ and $h((1-\alpha)^k) \le h(\sqrt{\epsilon/\Delta})$. This leads to \[ \frac{ \sum_{i=0}^{k-1} (1-\alpha)^{3i}}{( \sum_{i=0}^{k-1} (1-\alpha)^{2i})^{3/2}}  \le h(\sqrt{\epsilon/\Delta}) \le 1 + 3\frac{\epsilon}{\Delta}\]
when $\epsilon$ is small enough. Hence when $m \ge 2$, 
\[ C \frac{ m \sum_{i=0}^{k-1} (1-\alpha)^{3i}}{( m \sum_{i=0}^{k-1} (1-\alpha)^{2i})^{3/2}} \le 0.3627(1 + 3\frac{\epsilon}{\Delta}) \rightarrow 0.3627 \]
Therefore, what we show is 
\[ m \le \frac{\alpha}{2\sqrt{2\epsilon}} \implies \liminf_{\epsilon \rightarrow 0} \PP \left (x_k -(1-\alpha)^{k}x_0 \ge \sqrt{2\epsilon} \right ) \ge 0.1\]
Hence we must have $m > \frac{\alpha}{2\sqrt{2\epsilon}}$. Together with the requirement $ k \ge  \frac{\log \left( \frac{\Delta}{\epsilon} \right )}{-2\log (1-\alpha)}$, we could bound the total iteration $T=km$ as in the expectation case.\\

{\bf{SVRG:}}\\
So far, we have proved the result for applying gradient descent with full gradient estimator. We are going to show that applying SVRG on the proposed function is yields indeed a full gradient, but with a different stepsize. This allows us to conclude since our previous result holds for any stepsize $\alpha$.

We fix a reference point $x_k$ and consider the inner loop with respect to $\tilde{x} = x_k$. Recall that the update of SVRG in the inner loop is given by
\begin{align*} 
x_t & = x_{t-1} - \eta ( \nabla f_{i_t}(x_{t-1}) - \nabla f_{i_t}(\tilde{x} ) + \tilde{\mu}). \\
    & = (1-\eta) x_{t-1} + \eta \tilde{x}  - \eta \tilde{\mu}
\end{align*}
Therefore, a simple recurrence leads to
\[ x_t = \tilde{x} - \eta (\sum_{i=0}^{t-1} (1-\eta)^i ) \tilde{\mu}\]
Hence, the next reference point is given by
\[ x_{k+1} = \frac{1}{\msvrg} \sum_{t=1}^{\msvrg} x_t = \tilde{x} - \tilde{\eta} \tilde{\mu}  = x_k - \tilde{\eta} g(x_k),\]
where $\tilde{\eta} =  \frac{\eta}{\msvrg} \sum_{t=0}^{\msvrg-1} (\msvrg -t)(1-\eta)^t $ does 
not depend on $k$ and $g(x_k)$ is the gradient estimator at $x_k$. This is exactly a GD with 
gradient estimator and stepsize $\tilde{\eta}$. Therefore, both the expectation and high 
probability lower bound follows from the result of GD. 
\end{proof}




\end{document}